\definecolor{waterblue}{cmyk}{0.50, 0, 0.20, 0}
\definecolor{LightCyan}{rgb}{0.8, 0.9, 1}
\newcommand{\piref}{\pi_{\text{ref}}}
\newcommand{\epsilonRM}{\epsilon_{\text{RM}}}
\newcommand*{\rom}[1]{\expandafter\@slowromancap\romannumeral #1@}
\newcommand{\epsilonopt}{\epsilon_{\text{opt}}}
\title{\huge Best-of-Majority: Minimax-Optimal 
Strategy for Pass@$k$ Inference Scaling}
\author
{
    Qiwei Di${}^*$\thanks{Equal contribution}\thanks{Department of Computer Science, University of California, Los Angeles, CA 90095, USA; e-mail: {\tt qiwei2000@cs.ucla.edu}}
    ~~~~
    Kaixuan Ji$^*$\thanks{Department of Computer Science, University of California, Los Angeles, CA 90095, USA; email: {\tt kaixuanji@cs.ucla.edu}}
    ~~~~
    Xuheng Li$^*$\thanks{Department of Computer Science, University of California, Los Angeles, CA 90095, USA; email: {\tt xuheng.li@cs.ucla.edu}}
    ~~~~
    Heyang Zhao\thanks{Department of Computer Science, University of California, Los Angeles, CA 90095, USA; e-mail: {\tt hyzhao@cs.ucla.edu}}
	~~~~
	Quanquan Gu\thanks{Department of Computer Science, University of California, Los Angeles, CA 90095, USA; e-mail: {\tt qgu@cs.ucla.edu}}
}
\begin{document}
    \date{}
    \maketitle
\begin{abstract}
LLM inference often generates a batch of candidates for a prompt and selects one via strategies like majority voting or Best-of-$N$ (BoN). For difficult tasks, this single-shot selection often underperforms. Consequently, evaluations commonly report Pass@$k$: the agent may submit up to $k$ responses, and only the best of them is used when computing regret.
Motivated by this, we study inference scaling in the more general Pass@$k$ inference setting, and prove that neither majority voting nor BoN exhibits the desirable scaling with $k$ and the sampling budget $N$.
Combining the advantages of majority voting and BoN, we propose a new inference strategy called Best-of-Majority (BoM), with a pivotal step that restricts the candidates to the responses with high frequency in the $N$ samples before selecting the top-$k$ rewards.
We prove that when the sampling budget is $N=\tilde\Omega(C^*)$, the regret of BoM is $O(\epsilon_{\mathrm{opt}}+\sqrt{\epsilon_{\mathrm{RM}}^2C^*/k})$, where $C^*$ is the coverage coefficient, $\epsilon_{\mathrm{RM}}$ is the estimation error of the reward model, and $\epsilon_{\mathrm{opt}}$ is the estimation error of reward at the optimal response.
We further establish a matching lower bound, certifying that our algorithm is minimax optimal. Beyond optimality, BoM has a key advantage: unlike majority voting and BoN, its performance does not degrade when increasing $N$.
Experimental results of inference on math problems show BoM outperforming both majority voting and BoN.
\end{abstract}

\section{Introduction}
Scaling law serves as a powerful tool for guiding the \textit{training} of large language models (LLMs), providing insight into how increased training compute, data, and model size contribute to performance improvements.
Originating in the early days of deep neural networks \citep{hestness2017deep,rosenfeld2019constructive}, the concept has since demonstrated remarkable predictive power across a variety of domains, including strategic board games \citep{jones2021scaling}, image generation \citep{henighan2020scaling,yu2022scaling,peebles2023scalable}, video modeling \citep{brooks2024video}, language generation \citep{kaplan2020scaling,hoffmann2022training,achiam2023gpt}, retrieval systems \citep{fang2024scaling,cai2025exploring}, and reward modeling \citep{gao2023scaling,rafailov2024scaling}. While training-time scaling has proven effective, it is also highly resource-intensive. As a result, increasing attention has been directed toward a complementary paradigm: \textit{inference}, which examines how model performance can be improved after training.
This relationship between additional compute at inference time and performance improvement is known as the inference scaling law \citep{brown2024large,snell2024scaling,wu2024inference,guo2025deepseek}.
 
Compared to training-time scaling, inference scaling allows for increasing computational cost in several distinct ways, including expanding the generation input via chain-of-thought prompting \citep{wei2022chain,li2024chain}, incorporating iterative self-improvement, \citep{zheng2023judging,wu2024meta}, and applying search-based algorithms \citep{yao2023tree,feng2023alphazero,gao2024interpretable,zhang2024rest}. It can also be realized through repeated sampling, using strategies such as majority voting \citep{wang2022self,lewkowycz2022solving,li2023making} or Best-of-$N$ (BoN) \citep{lightman2023let}. In parallel, a growing line of works has sought to establish theoretical guarantees for inference strategies. \citet{wu2024inference} provided convergence bounds and rates for the scaling of majority voting algorithms. \citet{huang2024self} showed that BoN can achieve self-improvement via a special mechanism called sharpening. \citet{huang2025best} analyzed the sample complexity of BoN and proposed a pessimistic inference algorithm with provable benefits.

While most existing analyses focus on inference algorithms that output a single response, there are tasks that allow for multiple candidate outputs, where it is considered solved if any one of them is correct. This setting is captured by the Pass@$k$ metric \citep{li2022competition}. Building on this metric, we propose a novel \textbf{Pass@$k$} inference framework, in which the inference algorithm is allowed to generate $N$ responses and return up to $k$ of them.
Since $N > k$, the performance depends not only on generating a diverse set of candidates but also on the algorithm’s ability to effectively select the $k$ outputs that are most likely to be correct.
\citet{brown2024large} conducted empirical studies on this inference framework and observed the relationship between the coverage and the performance of the algorithm. However, this work is restricted to the majority voting and BoN inference strategies, and failed to theoretically justify the inference scaling law.

As there have been few works on understanding the scaling of the Pass@$k$ inference problem, we are motivated to investigate the following fundamental question:

\emph{Q1: What is the optimal scaling of the Pass@$k$ inference problem?}

To answer this question, we derive a minimax lower bound as a function of $k$ that characterizes the fundamental limits of any Pass@$k$ inference strategy, establishing the theoretical scaling behavior for Pass@$k$ inference problems.

Going one step further, we also aim to evaluate existing inference strategies for the Pass@$k$ inference problem and find a strategy that achieves the optimal scaling. Beyond standard metrics like regret and sample complexity, we further introduce a formal definition of \textit{scaling-monotonicity} \citep{huang2025best}, which captures whether an inference algorithm maintains (or improves) its performance as the number of samples $N$ increases. This leads to our second question:

\emph{Q2: What inference strategies are scaling-monotonic and optimal in the Pass@$k$ inference setting?}

Unfortunately, our analysis reveals that majority voting and BoN are not scaling-monotonic. Furthermore, these methods face fundamental limitations that make it difficult, if not impossible, to attain the optimal regret scaling with respect to $k$. To address this issue, we propose a new inference strategy, Best-of-Majority (BoM), which integrates the core ideas of both majority voting and BoN. We establish a regret upper bound for BoM that matches the minimax lower bound, thereby demonstrating that our algorithm is minimax optimal. Please refer to Table \ref{table:1} for detailed results.

\begin{table}[ht]
\label{table:1}
\centering
\caption{Comparison of Pass@$k$ inference strategies. Our algorithm BoM is the first minimax-optimal Pass@$k$ inference strategy. Compared with majority voting and BoN, BoM is scaling-monotonic, indicating that the optimal performance can be achieved with large sampling budget $N$, making it preferable when scaling up $N$ to achieve better performance. Additionally, the term $O(\sqrt{\epsilonRM^2C^*/k})$ in the regret of BoM scales optimally with $k$, while majority voting suffers from constant regret. BoN lacks the regret upper bound in the Pass@$k$ inference problem.}
\begin{tabular}{cccc}
\toprule
Algorithm & Worst-case regret & Scaling-monotonic & Optimal $k$-scaling\\
\midrule
Majority voting  & $\Omega(1)$ & No & No\\
Best-of-$N$ & $\Omega(\min\{1, \sqrt{\epsilonRM^2N/k}\})$ & No &Unknown\\
\rowcolor{waterblue!50!} Best-of-Majority (Ours)  & $O(\epsilonopt+\sqrt{\epsilonRM^2C^*/k})$ & Yes & Yes\\
\midrule
Lower Bound & $\Omega(\epsilonopt+\sqrt{\epsilonRM^2C^*/k})$ & - & -\\
\bottomrule
\end{tabular}
\end{table}

We summarize our main contributions as follows:
\begin{itemize}[leftmargin=*]
    \item \textbf{Inference scaling laws for Pass@$k$.} We show that the minimax lower bound of the regret is $\Omega(\epsilonopt+\sqrt{\epsilonRM^2C^*/k})$ for any Pass@$k$ inference strategy, where $\epsilonopt$ is the error of the reward model at the optimal response, $\epsilonRM$ is the expected error of the reward model, and $C^*$ is the coverage of the reference LLM.
    \item \textbf{Optimal algorithm for Pass@$k$.} We propose a new Pass@$k$ inference strategy called Best-of-Majority (BoM). At the core of BoM is a step similar to majority voting that restricts the candidates to the responses with high frequencies in the generated samples, before selecting responses with top-$k$ rewards. We prove that the regret of BoM is $O(\epsilonopt+\sqrt{\epsilonRM^2C^*/k})$ with sample complexity $N=\tilde\Theta(C^*)$, thus matching the minimax lower bound without increasing the computation overhead. With a formal definition of scaling monotonicity, we show that BoM is scaling monotonic, while majority voting and BoN are not.
    \item \textbf{Experiments.} We compare our algorithm BoM against majority voting and BoN. Our results empirically demonstrate the superiority of BoM against majority voting and BoN and verify the scaling monotonic properties of three algorithms, which corroborates our theoretical results.
\end{itemize}

\noindent\textbf{Notation.}
We use $[M]$ to denote the set of integers $\{1, 2, \dots, M\}$.
We use $\ind[\cdot]$ to denote the indicator function.
We use $\delta_{ij}$ to denote the Kronecker delta, i.e., $\delta_{ij}=1$ if $i=j$, and $\delta_{ij}=0$ otherwise.
We use $y, y_i$ to denote the elements in the set of response $\cY$, $\hat y,\hat y_i$ to denote the generated responses, and $\tilde y, \tilde y_i$ to denote the final outputs.
We use standard asymptotic notations $O(\cdot)$, $\Omega(\cdot)$, and $\Theta(\cdot)$, and use $\tilde O(\cdot)$, $\tilde \Omega(\cdot)$ and $\tilde \Theta(\cdot)$ to further hide the logarithmic factors. 

\section{Related Work}

\noindent\textbf{Inference-time scaling.} Compared to training-time scaling laws, the study of inference-time scaling laws has emerged much more recently. \citet{sardana2024beyond} extended the Chinchilla scaling law \citep{hoffmann2022training} to incorporate inference costs. \citet{wu2024inference} conducted a systematic study of inference scaling laws, analyzing a range of inference strategies including greedy search, majority voting, best-of-$N$, weighted voting, and two variants of tree-based search algorithms. Concurrently, \citet{snell2024scaling} analyzed the inference scaling problem by searching against process-based verifier reward models. In contrast, \citet{brown2024large} explored repeated sampling as a simple scaling method to improve performance. \citet{chen2024more} studied the performance of majority voting and a variant that incorporates a filtering mechanism. They observed that as the number of generated samples $N$ increases, performance initially improves but eventually declines.  They also proposed a predictive scaling model to characterize the performance trend. \citet{muennighoff2025s1} developed simple methods to construct a sample-efficient test-time scaling dataset.

\noindent\textbf{Inference strategies.} One of the most straightforward inference strategies is best-of-$N$, which has been widely adopted in the inference of language models \citep{stiennon2020learning,nakano2021webgpt,touvron2023llama,gao2023scaling}. For its theoretical guarantees, \citet{yang2024asymptotics} established a connection between the asymptotic behavior of BoN and KL-constrained reinforcement learning methods, characterizing this relationship through information-theoretic quantities.
\citet{beirami2024theoretical} provided a tighter upper bound for the KL divergence between the BoN policy and the reference policy. \citet{mroueh2024information} proved guarantees for BoN algorithm from a information theoretic view. \citet{huang2025best} further provided guarantees
 on performance when the estimated reward model and
 true reward are mismatched. \citet{aminian2025best} extended the analysis to a smoothed variant of BoN. Another common inference strategy is majority voting \citep{lewkowycz2022solving,wang2022self,li2023making}. \citet{wu2024inference} established convergence bounds and rates characterizing how the performance of majority voting algorithms scales with the number of samples. Other inference strategies include variants of BoN \citep{jinnai2024regularized,qiu2024treebon}, rejection sampling \citep{liu2023statistical,xu2024genarm}, and search-based algorithms \citep{yao2023tree,feng2023alphazero,gao2024interpretable,zhang2024rest}.

\noindent\textbf{Pass@$k$ alignment.} To the best of our knowledge, the theoretical Pass@$k$ inference framework is novel and remains unexplored in the existing literature. However, Pass@$k$ has also been proved useful in the training of large language models. \citet{tang2025optimizing} demonstrated that training language models using a Pass@$k$-based objective can lead to improved overall model performance. More recently, \citet{chen2025pass} used Pass@$k$ as the reward to train the language model and observe improvements on its exploration ability. \citet{liang2025beyond} proposed training methods to mitigate entropy collapse, which in turn lead to improved performance on the Pass@$k$ metric.

\section{Pass@$k$ Inference Scaling Problem}
Let $\cX$ be the set of prompts and $\cY$ the set of responses. We represent an LLM as a conditional policy $\pi(\cdot\mid x)$ that maps each prompt $x\in\cX$ to a distribution over $\cY$. We have access to a reference policy $\pi_{\text{ref}}$, which, for instance, can be trained using the supervised finetuning (SFT) method. For each pair $(x, y) \in \cX \times \cY$, we assume the existence of a ground-truth reward model $r^*: \cX \times \cY \to [0,1]$, which evaluates the quality of response $y$ given prompt $x$.

During inference time, we can use the reference policy $\piref$ to generate multiple responses. To evaluate the quality of these responses, we utilize an imperfect reward model $\hat r: \cX \times \cY \rightarrow [0,1]$, which provides approximate assessments of response quality. For a given prompt $x$, we make the following assumptions regarding the accuracy of the reward model.
\begin{assumption}
[Reward Estimation Error]
\label{assump:epsilon}
The expected squared error between $r^*$ and $\hat r$ is upper bounded by $\epsilonRM^2(x)$, i.e,  
\begin{align*}
    \EE_{y \sim \piref(\cdot|x)}\Big[\big(r^*(x,y)-\hat r(x,y)\big)^2\Big] \le \epsilon_{\text{RM}}^2(x).
\end{align*}
\end{assumption}
\begin{assumption}
\label{assump: reward}
    There exists a unique $y^* = \argmax_{y \in \cY}{r^*(x,y)}$, with $r^*(x,y^*)=1$. Moreover, the estimated reward at $y^*$ is close to optimal, satisfying 
    \begin{align*}
        |r^*(x,y^*) - \hat r(x,y^*)| = \epsilonopt(x).
    \end{align*}
    Combining Assumption \ref{assump:epsilon} with Assumption~\ref{assump: reward}, we directly know $\piref(y^*|x) \cdot \epsilon^2_{\text{opt}}(x) \le \epsilonRM^2(x)$.
\end{assumption} 
In practice, an accurate reward model is crucial for the post-training and inference of large language models. A common approach is to align the model with human preference data through supervised learning or reinforcement learning from human feedback (RLHF)~\citep{ouyang2022training,casperopen,zhu2024starling,yang2024regularizing}. Since the training of the reward model extensively studied and is not the focus of this work, we directly assume access to a pre-training reward model that satisfies Assumptions~\ref{assump:epsilon} and \ref{assump: reward}.

In this work, we study a novel setting called the \textbf{Pass@$k$} inference scaling problem.
Different from the settings where the model is allowed to generate and submit $k$ candidate responses, our goal is to maximize the highest ground-truth reward of the $k$ samples. Specifically, for a given prompt $x$, the model is allowed to generate up to $N$ candidate responses and select a subset ${y_1, y_2, \ldots, y_k}$ for submission. Increasing $N$ improves the likelihood of obtaining high-quality outputs, but also incurs greater computational cost, a trade-off between accuracy and efficiency. We consider the following regret metric:
\begin{align}
\label{eq:regret}
    \text{Regret}(x) =  \EE_{\pi^*}\big[r^*(x,\cdot)\big] - \EE_{y_1,y_2,\ldots,y_k}\Big[\max_{1\le i \le k} \{r^*(x,y_i)\}\Big],
\end{align}
where $\pi^* = \pi^*(\cdot|x)$ is the maximizer of $r^*$. 

In tasks with a unique correct answer, such as mathematical problem solving, the ground-truth reward model $r^*$ functions as a binary verifier, returning values in $\{0,1\}$. In this case, the regret~\eqref{eq:regret} naturally aligns with the Pass@$k$ metric \citep{li2022competition}, since minimizing~\eqref{eq:regret} is equivalent to maximizing the probability that at least one of the $k$ selected responses is correct.
\begin{remark}
    Compared with the sample-and-evaluate framework \citep{huang2025best}, our framework goes one step further by explicitly characterizing the dependence on $k$. This dependence constitutes a novel focus of our analysis, as it has not been examined in prior works on inference-time algorithms \citep{huang2024self,huang2025best,verdun2025soft}.
\end{remark}
In addition, following \citet{huang2025best}, we introduce the reference policy's $L_1$-coverage coefficient as follows:
\begin{align}
\label{eq: coverage}
    C^*(x) := \EE_{y \sim \pi^*(\cdot|x)} \big[{\pi^*(y|x)}/{\piref(y|x)}\big].
\end{align}
Moreover, the uniform coverage coefficient is defined as
\begin{align}
\label{eq: uniform coverage}
    C^*_{\infty}(x) := \sup_y \big[{\pi^*(y|x)}/{\piref(y|x)}\big].
\end{align}
Since Assumption~\ref{assump: reward} ensures that the optimal policy $\pi^*$ is deterministic and uniquely defined as $\pi^*(y|x) = \ind(y=y^*)$, the $L_1$ and uniform coverage coefficients coincide. Consequently, we have $C^*(x) = C^*_{\infty}(x)= 1/\piref(y^*|x)$.

Besides the regret, we are also concerned with the following important property of the algorithm, named as \textit{scaling-monotonicity} \citep{huang2025best}.
We provide the formal definition as follows:
\begin{definition}
\label{def:consistent}
Assume that $k$, prompt $x$ and the coverage coefficient $C^*(x)$ are fixed. An algorithm is \textit{scaling-monotonic} if for any $\delta>0$, there exists $\epsilon_0>0$ and $N_0\in\NN_+$ such that for any $N\ge N_0$ and any instance that satisfies Assumption \ref{assump:epsilon} with $\epsilonRM(x)\le\epsilon_0$, the regret satisfies
\begin{align*}
    \text{Regret}(x) \le \delta.
\end{align*}
\end{definition}
Intuitively, a scaling-monotonic algorithm should achieve arbitrarily small regret if the reward model $\hat r$ is accurate and sufficiently many samples are observed. Furthermore, scaling monotonicity also guarantees that the performance of the algorithm does not degrade when increasing $N$. Therefore, it is a crucial property in practice because the sampling budget $N$ can be easily scaled up in hard instances instead of requiring accurate tuning.

\section{Suboptimality of Existing Inference Strategies}
\label{sec:suboptimal}

In this section, we first introduce two commonly used strategies for LLM inference, namely (weighted) majority voting (Section \ref{sec:MV}) and Best-of-$N$ (BoN, Section \ref{sec:BON}). We will show that neither strategy is scaling-monotonic by constructing hard instances where the inference strategies suffer from constant regret even when $N\to\infty$. Additionally, the Pass@$k$ inference problem is less stringent than Pass@$1$, since it only requires success in any of the $k$ sampled attempts rather than a single one. Consequently, the regret is expected to decrease as $k$ increases, suggesting a negative association between regret and the sampling budget $k$.


\subsection{(Weighted) Majority Voting}
\label{sec:MV}
\begin{algorithm}[ht]
\caption{(Weighted) Majority Voting}
\begin{algorithmic}[1]\label{alg:WM}
    \REQUIRE Reference policy $\piref$, sampling budget $N$, number of candidates $k$, (estimated reward model $\hat r$, weight function $w(\cdot)$).
    \STATE Observe context $x$.
    \STATE Independently generate $N$ responses $\hat \cY = \{\hat y_1,\hat y_2,\ldots, \hat y_N\}$ from $\piref(\cdot|x)$.
    \IF{$|\hat \cY| \le k$} 
    \RETURN $\hat \cY$.
    \ELSE
    \STATE Calculate frequency of each response $y\in\hat\cY$: 
     $\hat \pi(y) = \frac{1}{N} \sum_{i=1}^N \ind[\hat y_i=y]$.
    \IF{\textit{weighted}}
    \STATE Query reward labels $(\hat r(x, \hat y_1), \dots, \hat r(x, \hat y_N))$.
    \STATE Select $\tilde y_1,\ldots, \tilde y_k = \text{Top-k} \big\{y \in \hat \cY: w(\hat r(y))\cdot\hat \pi(y)\big\}$.
    \ELSE \STATE 
    Select $\tilde y_1,\ldots, \tilde y_k = \text{Top-k} \big\{y \in \hat \cY: \hat \pi(y)\big\}$.
    \ENDIF
    \RETURN $\{\tilde y_1,\ldots, \tilde y_k\}$.
    \ENDIF
\end{algorithmic}
\end{algorithm}
Majority voting is a simple ensemble method for LLM inference: Multiple responses to the same prompt are sampled using the reference policy $\piref(\cdot|x)$ to make the responses diverse enough, and the answer occurring most often is selected as the final output.

Specifically, let $\hat y_1,\ldots, \hat y_N$ denote the $N$ generated responses for a given query. After calculating the frequency of each response $\hat \pi(y) = \frac{1}{N} \sum_{i=1}^N \ind(\hat y_i=y)$, the final prediction is then chosen as the answer that appears most frequently among these samples, i.e.,
\begin{align*}
    \tilde y_1,\ldots, \tilde y_k = \text{Top-k} \big\{y \in \hat \cY: \hat \pi(y)\big\}.
\end{align*}
Majority voting has demonstrated strong empirical performance \citep{wang2022self,lewkowycz2022solving,li2023making}. With a reliable reward model $\hat r$, it can be further enhanced by weighting candidate frequencies with reward scores. Using an increasing weighting function $w(\cdot)$, the selection rule becomes:
\begin{align*}
    &\tilde y_1,\ldots, \tilde y_k 
    = \text{Top-k} \big\{y \in \hat \cY: w\big(\hat r(y)\big) \cdot \hat \pi(y)\big\}.
\end{align*}
While the reward weighting introduces extra computation for reward evaluation, weighted majority voting has been shown to achieve better performance than the unweighted version \citep{wu2024inference}.
Despite its empirical success, we show that (weighted) majority voting is suboptimal in the worst case, even when the exact reward function is available, i.e., $\epsilonRM^2(x) = 0$.
\begin{theorem}\label{thm: lowerWM}
    For the (weighted) majority voting Algorithm~\ref{alg:WM} with weight function $w(\cdot)$, assume that $C^*(x) \ge 1 + 2kw(1)/w(1/2)$. Then, there exists an instance $\cI=(\cX, \cY, \pi^*, r^*, \piref, \hat r)$ such that the coverage coefficient is $C^*(x)$, and $\hat r = r^*$ satisfies Assumptions \ref{assump:epsilon} and \ref{assump: reward} with $\epsilonRM(x) = \epsilonopt(x) = 0$.
If $N \ge 9C^*(x)\log(2k+2)$, the algorithm suffers from a constant regret:
\begin{align*}
\text{Regret}(x)=\Omega\big(1\big).
\end{align*}
\end{theorem}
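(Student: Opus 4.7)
The plan is to construct an explicit hard instance in which the reference policy puts negligible mass on the unique optimal response while spreading the remaining mass across many equally plausible low-reward alternatives, so that even with an exact reward model the weighted vote is systematically hijacked by the distractors. Concretely, take $\cY=\{y^*,y_1,\ldots,y_{2k}\}$, set $r^*(x,y^*)=1$ and $r^*(x,y_i)=1/2$ for $i\in[2k]$, and let $\hat r=r^*$, so that $\epsilonRM(x)=\epsilonopt(x)=0$. Define
\[
\piref(y^*\mid x)=\frac{1}{C^*(x)}, \qquad \piref(y_i\mid x)=\frac{1-1/C^*(x)}{2k},
\]
which makes the coverage coefficient exactly $C^*(x)$ and, by Assumption~\ref{assump: reward}, forces $\pi^*(y\mid x)=\ind[y=y^*]$.

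Because $\hat r=r^*$, the expected weighted frequency of $y^*$ equals $w(1)/C^*(x)$, while that of each $y_i$ equals $w(1/2)(1-1/C^*(x))/(2k)$. The hypothesis $C^*(x)\ge 1+2kw(1)/w(1/2)$ is precisely what is needed for each distractor's expected weighted frequency to dominate $y^*$'s. Let $N_0$ and $N_i$ denote the sample counts of $y^*$ and $y_i$. Since $N\ge 9C^*(x)\log(2k+2)$ forces $\EE[N_0]\ge 9\log(2k+2)$, and the hypothesis also implies $\EE[N_i]\ge c\log(2k+2)$ for each distractor (for some absolute constant $c>0$), the multiplicative Chernoff bound combined with a union bound over the $2k+1$ response types gives, with probability at least $3/4$, simultaneous concentration $N_0\le (1+\delta)\EE[N_0]$ and $N_i\ge (1-\delta)\EE[N_i]$ for a sufficiently small $\delta>0$. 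On this high-probability event every distractor's weighted frequency strictly exceeds $y^*$'s, so $y^*$ is excluded from the top-$k$, every returned response has reward $1/2$, and the conditional regret equals $1/2$; hence $\text{Regret}(x)=\Omega(1)$.

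The main delicate point is the borderline case $C^*(x)=1+2kw(1)/w(1/2)$, in which the expected weighted frequencies coincide exactly and a purely one-sided Chernoff comparison degenerates. I would resolve this by a mild modification of the hard instance: replacing the $2k$ distractors by any $m\in\{k,k+1,\ldots,2k-1\}$ equally likely distractors yields a strict expected gap under the same assumption (since $C^*(x)-1\ge 2kw(1)/w(1/2)>mw(1)/w(1/2)$), while the inequality $m\ge k$ still guarantees that $y^*$ is pushed outside the top-$k$ whenever all $m$ distractors beat it on the high-probability event. A secondary, routine bookkeeping step is calibrating the Chernoff parameter $\delta$ so that the constant $9$ in $N\ge 9C^*(x)\log(2k+2)$ suffices to absorb the $(2k+1)$-way union bound.
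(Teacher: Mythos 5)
Your overall strategy --- distractors with true reward $1/2$, an exact reward model, and a Chernoff-plus-union-bound argument showing that the distractors' weighted frequencies beat $y^*$'s --- is the same as the paper's. But there is a genuine quantitative gap in your construction that you only partially diagnose. By spreading all of the residual mass $1-1/C^*(x)$ over $2k$ distractors, the ratio of a distractor's expected weighted frequency to that of $y^*$ is $\rho=\tfrac{w(1/2)(C^*(x)-1)}{2k\,w(1)}$, and the hypothesis only guarantees $\rho\ge1$. The problem is not confined to the exact borderline case you flag: whenever $C^*(x)$ exceeds the threshold $1+2kw(1)/w(1/2)$ by only a $(1+o(1))$ factor, we have $\rho=1+o(1)$, and since $N=\Theta(C^*(x)\log k)$ yields only $\Theta(\log k)$ expected samples per response type, no choice of a ``sufficiently small $\delta$'' lets the multiplicative Chernoff bound resolve a $1+o(1)$ gap while also surviving a union bound over $2k+1$ types. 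Your proposed remedy inherits the same defect for $m$ near $2k$: with $m=2k-1$ distractors the guaranteed gap is only $\rho\ge 2k/(2k-1)=1+O(1/k)$, still too small. The remedy does work if you commit to $m=k$, since then $\rho=\tfrac{w(1/2)(C^*(x)-1)}{k\,w(1)}\ge2$ for every admissible $C^*(x)$, a constant-factor gap that constant-$\delta$ deviations can absorb.

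The paper sidesteps all of this with a device you omit: a zero-reward ``dump'' response $y_0$ that absorbs the excess probability mass, so that each of the $k$ distractors can be pinned at mass exactly $\eta/C^*(x)$ with $\eta=2w(1)/w(1/2)$. This makes the distractor-to-$y^*$ ratio of expected weighted frequencies exactly $2$ independently of $C^*(x)$, and the union bound runs over only $k+1$ events, which is how the constant $9$ and the threshold $\log(2k+2)$ are calibrated. To repair your write-up, either adopt the dump response or fix $m=k$ from the start; as written, the argument does not establish the claimed $\Omega(1)$ regret for all $C^*(x)\ge 1+2kw(1)/w(1/2)$.
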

Majority voting relies on exploiting the reference model’s distribution. Consequently, the hard case can be constructed by designing multiple distinct ``bad'' answers, each receiving higher probability under $\piref$.
    Theorem \ref{thm: lowerWM} demonstrates that increasing the sampling budget $N$ or the number of submitted responses $k$ does not guarantee consistent improvement for (weighted) majority voting. In fact, when $N$ is sufficiently large, (weighted) majority voting incurs constant regret even if the reward model is accurate.
\subsection{Best-of-N}
\label{sec:BON}
\begin{algorithm}[ht]
\caption{Best-of-$N$ (BoN)}
\begin{algorithmic}[1]\label{alg:BoNk}
    \REQUIRE Estimated reward model $\hat r$, reference policy $\piref$, sampling budget $N$, number of candidates $k$.
    \STATE Observe context $x$.
    \STATE Independently generate $N$ responses $\hat \cY = \{\hat y_1,\hat y_2,\ldots, \hat y_N\}$ from $\piref(\cdot|x)$.
    \STATE Query reward labels $(\hat r(x,y_1), \ldots, \hat r(x,y_N))$.
    \IF{$|\hat \cY| \le k$} 
    \RETURN $\hat \cY$.
    \ELSE 
    \STATE Select $\tilde y_1,\ldots, \tilde y_k = \text{Top-k} \big\{y \in \hat \cY: \hat r(x, y)\big\}$.
    \RETURN $\{\tilde y_1,\ldots, \tilde y_k\}$.
    \ENDIF
\end{algorithmic}
\end{algorithm}
Best-of-$N$ is another effective LLM inference strategy. Instead of aggregating answers by frequency, the model generates multiple candidate responses for the same query and then selects the single best response according to a reward model $\hat r$. Formally, given $N$ sampled responses $\hat y_1,\ldots, \hat y_N$, the Best-of-$N$ strategy selects the outputs that maximize the reward signal $\hat r$, i.e.,
\begin{align*}
    \tilde y_1,\ldots, \tilde y_k = \text{Top-k} \big\{y \in \hat \cY: \hat r(y)\big\}.
\end{align*}
For the BoN algorithm, we have the following theorem on the lower bound of the regret.
\begin{theorem}\label{thm: lowerBoN}
    For BoN (Algorithm~\ref{alg:BoNk}), assume that $C^*(x) \ge 2k$. Then, there exists an instance $\cI=(\cX, \cY, \pi^*, r^*, \piref, \hat r)$ such that the coverage coefficient is $C^*(x)$, and $(\hat r,r^*)$ satisfies Assumptions~\ref{assump:epsilon} and \ref{assump: reward} with $\epsilonRM(x)$ and $\epsilonopt(x)$. If $N \le C^*(x)$, Algorithm \ref{alg:BoNk} suffer from a constant regret, i.e., 
    \begin{align*}
    \text{Regret}(x)=\Omega\big(1\big).
    \end{align*}
    Otherwise, the regret satisfies
\begin{align*}
    \text{Regret}(x) = \Omega \Big( \min\Big\{1, \sqrt{N\epsilonRM^2(x)/k}\Big\}\Big).
\end{align*}
\end{theorem}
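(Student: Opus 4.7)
The plan is to prove the two regimes separately by constructing distinct hard instances for each.

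For the regime $N\le C^*(x)$, the plan is to take a minimal instance with two responses $\cY=\{y^*,y_{\text{bad}}\}$, set $\piref(y^*\mid x)=1/C^*(x)$, $r^*(y^*)=1$, $r^*(y_{\text{bad}})=0$, and take $\hat r=r^*$ (so $\epsilonRM=\epsilonopt=0$ and Assumptions~\ref{assump:epsilon} and \ref{assump: reward} are trivially satisfied). Since BoN can only return responses it has sampled, the probability that $y^*$ never appears in $N$ i.i.d.\ draws is $(1-1/C^*(x))^N \ge (1-1/C^*(x))^{C^*(x)}\ge 1/4$ for $C^*(x)\ge 2$. On this event BoN must return $y_{\text{bad}}$, incurring regret $1$, so $\mathbb{E}[\text{Regret}(x)]=\Omega(1)$.

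For the regime $N>C^*(x)$, the plan is to engineer an instance in which $\hat r$ systematically overestimates a cluster of many low-probability ``bad'' responses so that BoN's top-$k$ ranking by $\hat r$ prefers them over $y^*$. Let $\cY=\{y^*,y_1,\dots,y_M,y_0\}$ with $\piref(y^*\mid x)=1/C^*(x)$, $\piref(y_i\mid x)=p$ for $i\in[M]$, and $\piref(y_0\mid x)=1-1/C^*(x)-Mp$ absorbing the remaining mass. Set $r^*(y^*)=1$, $r^*(y_i)=1-\Delta$, and $r^*(y_0)=\hat r(y_0)=0$ so the filler contributes nothing to $\epsilonRM$; and set $\hat r(y^*)=1-\epsilonopt$ and $\hat r(y_i)=1$. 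Then Assumption~\ref{assump: reward} is satisfied with the prescribed $\epsilonopt$, and $\mathbb{E}_{y\sim\piref}[(\hat r-r^*)^2]=\epsilonopt^2/C^*(x)+Mp\Delta^2$, which I set equal to $\epsilonRM^2$; this pins down $Mp\Delta^2 = \epsilonRM^2-\epsilonopt^2/C^*(x)$, a nonnegative quantity of order $\epsilonRM^2$ (picking $\epsilonopt$ modestly inside the feasibility window).

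I then tune the remaining degrees of freedom so that (i) the expected total number of bad samples $NMp$ is $\Theta(k)$ and (ii) $\Delta$ is as large as possible. Taking $NMp=4k$ (which needs $Mp\le 1-1/C^*(x)$; this is where $C^*(x)\ge 2k$ is used, together with $M$ taken large so individual $p$'s remain tiny) forces $\Delta=\Theta(\sqrt{N\epsilonRM^2/k})$, capped at $1$ when that expression exceeds $1$. A multiplicative Chernoff bound on the Binomial$(N,Mp)$ count of bad-response occurrences shows that with constant probability at least $k$ of the $y_i$'s appear among the $N$ samples. On this event the top-$k$ $\hat r$-values, namely $\hat r(y_i)=1>1-\epsilonopt=\hat r(y^*)$, are all attained by $y_i$'s, so BoN returns only bad responses and incurs regret exactly $\Delta$, giving $\mathbb{E}[\text{Regret}(x)]=\Omega(\min\{1,\sqrt{N\epsilonRM^2/k}\})$.

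The main obstacle I expect is the bookkeeping around the parameter region: I must simultaneously maintain $Mp\le 1-1/C^*(x)$, keep individual $p$ small enough that occurrences of bad responses are essentially a sum of independent Bernoulli's, ensure $Mp\Delta^2$ matches the desired $\epsilonRM^2$ exactly, and handle the cap $\Delta\le 1$ so the two branches of the $\min$ in the stated bound are both covered. The boundary case where $\epsilonopt^2/C^*(x)$ itself consumes most of the $\epsilonRM^2$ budget will be handled by choosing $\epsilonopt$ small (say $\epsilonopt\le\epsilonRM$), which is a legitimate instance within the hypothesis since Assumption~\ref{assump: reward} only upper-bounds $\epsilonopt$.
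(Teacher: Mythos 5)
Your Case~1 is exactly the paper's: a two-response instance with $\piref(y^*\mid x)=1/C^*(x)$, $\hat r=r^*$, and the bound $(1-1/C^*(x))^N\ge 1/4$. Your Case~2 construction is also the same in all essentials --- a cluster of $M$ bad responses with total reference mass $\Theta(k/N)$, $\hat r$ inflated to $1$ above $\hat r(y^*)$, and $r^*$ deflated by $\Delta=\Theta(\sqrt{\epsilonRM^2/(Mp)})$ so that the squared-error budget pins down $\Delta=\Theta(\sqrt{N\epsilonRM^2(x)/k})$. The one place you genuinely diverge is the probabilistic step showing that at least $k$ \emph{distinct} bad responses are sampled: the paper fixes $M=2k$ and must then control the occupancy count $D=\sum_j O_j$ via a Chernoff bound for negatively correlated indicators (proved by an induction on subsets), whereas you take $M$ large so that collisions among the $\Theta(k)$ cluster samples are negligible and a plain Binomial Chernoff bound essentially suffices. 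That trade is legitimate and arguably simpler, but note that as written your sentence conflates the Binomial$(N,Mp)$ count of \emph{samples} landing in the cluster with the number of \emph{distinct} $y_i$'s hit (BoN's top-$k$ is over the set $\hat\cY$, so only distinct responses matter); you still need the short birthday-type step that, conditional on $S\ge 3k$ cluster samples spread uniformly over $M\gg k^2$ bins, the expected number of collisions is $O(k^2/M)\le 1/2$, hence at least $k$ distinct responses appear with constant probability. With that half-sentence added, your argument closes, and it avoids the negative-correlation machinery entirely. Both your write-up and the paper's share the same minor bookkeeping issue near the boundary $N=\Theta(k)$, where $p=4k/N$ (total cluster mass) can exceed $1-1/C^*(x)$ and must be capped, which only costs a constant; you flag this explicitly, which is good.
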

BoN leverages the reward model’s signal, but this makes it vulnerable to reward overoptimization \citep{gao2023scaling,stroebl2024inference} when the reward model is inaccurate. Thus, we construct the hard case by introducing multiple distinct ``bad” answers that are assigned higher estimated rewards. With a carefully chosen, problem-dependent sampling budget $N = \tilde\Theta(C^*(x))$, the lower bound will become $\tilde\Omega(\sqrt{C^*(x)\epsilonRM^2(x)/k})$, which aligns with the general lower bound for inference algorithms (as will be discussed in Section~\ref{sec:lower}). However, this lower bound implies that BoN is not scaling-monotonic, as for fixed $k$ and $\epsilonRM(x)$, the regret converges to a non-zero constant when $N$ becomes sufficiently large. Thus, increasing $N$ for BoN not only causes higher computational overhead, but can also degrade performance when the reward model is inaccurate.
\begin{remark}
    When $k=1$, Theorem 3.4 in \citet{huang2025best} shows that the regret of BoN can be upper bounded by $\tilde O\big(\sqrt{C^*(x)\epsilonRM^2(x)}\big)$ with $N = \tilde \Theta\big(C^*(x)\big)$. Compared with the lower bound in Theorem \ref{thm: lowerBoN}, the regret bound for BoN still exhibits a gap of $1/\sqrt{k}$ under the Pass@$k$ setting. However, the proof techniques for BoN in Pass@$1$ inference problems cannot be directly extended to the Pass@$k$ setting. Specifically, their analysis introduces an auxiliary distribution induced by rejection sampling, which becomes difficult to generalize when the algorithm is allowed to select $k$ distinct responses as in our framework. More importantly, their proof relies on bounding the expected squared error of the reward model under the optimal policy $\pi^*$, i.e., $\EE_{\pi^*}[|r^*(x,y)-\hat r(x,y)|]$, which can be upper bounded by $\sqrt{C^*(x)\epsilonRM^2(x)}$ using the Cauchy-Schwarz inequality. While this quantity does not affect the regret bound in their original setting, it becomes the dominant term in our case, which prevents the derivation of the optimal $1/\sqrt{k}$ regret scaling. For these reasons, we conjecture that it may be inherently impossible to obtain a regret upper bound for BoN with the optimal $1/\sqrt{k}$ scaling under the Pass@$k$ setting. We leave this to future work.
\end{remark}
\section{Optimal Algorithm for Pass@k Inference}
\label{sec:BoM}
\begin{algorithm}[t!]
\caption{Best-of-Majority (BoM)}
\begin{algorithmic}[1]\label{alg:1}
    \REQUIRE Estimated reward model $\hat r$, reference policy $\piref$, frequency threshold $\alpha$, sampling budget $N$, number of candidates $k$.
    \STATE Observe context $x$.
    \STATE Independently generate $N$ responses $\hat \cY = \{\hat y_1,\hat y_2,\ldots, \hat y_N\}$ from $\piref(\cdot|x)$.
    \STATE Calculate frequency of each response $y\in\cY$: $\hat \pi(y) = \frac{1}{N} \sum_{i=1}^N \ind(\hat y_i=y)$.
    \STATE Eliminate responses with frequency less than $\alpha$:
        $\hat \cY_\alpha = \{y \in \hat\cY: \hat \pi(y) \ge\alpha\}$.
    \STATE Query reward labels $(\hat r(x, \hat y_1), \dots, \hat r(x, \hat y_N))$.\label{alg:line5}
    \IF{$|\hat \cY_\alpha| \le k$} 
    \RETURN $\hat \cY_\alpha$.
    \ELSE \STATE Select $\tilde y_1,\ldots, \tilde y_k = \text{Top-k} \big\{y \in \hat \cY_\alpha: \hat r(y)\big\}$.
    \RETURN $\{\tilde y_1,\ldots, \tilde y_k\}$.
    \ENDIF
\end{algorithmic}
\end{algorithm}

In Section \ref{sec:suboptimal}, we have proved that neither (weighted) majority voting nor BoN is scaling monotonic, and neither demonstrates the desirable scaling with $k$ for the Pass@$k$ inference scaling problem. Moreover, our earlier analysis reveals complementary strengths of these methods: majority voting performs well when the reference policy assigns a higher probability to the ground-truth answer than to incorrect ones, while Best-of-$N$ can be highly effective when the reward model $\hat r$ is accurate. However, each method also exhibits weaknesses, as they fail to fully exploit the available information from either the policy or the reward model. To address these limitations, we introduce a new algorithm, Best-of-Majority (BoM), which integrates the advantages of both approaches.

Our algorithm is built upon the principles of pessimism commonly used in reinforcement learning \citep{buckman2020importance,jin2021pessimism}. When the reference policy $\piref$ assigns low probability to a response, that response is rarely observed in the training data. Consequently, the reward model receives limited supervision in this region, leading to higher uncertainty and likelihood of error. The pessimism principle advocates making conservative predictions under such uncertainty, which motivates our design choice: we rely on the reward model only when $\piref$ assigns sufficiently high probability to the candidate. Since $\piref$ cannot be directly observed, we approximate it using empirical frequencies of generated responses. Specifically, let $\hat y_1,\ldots, \hat y_N$ denote the $N$ generated responses for a given query. We first calculate the empirical frequency of each emerging response:
\begin{align*}
    \hat \pi(y) = \frac{1}{N} \sum_{i=1}^N \ind(\hat y_i=y).
\end{align*}
Guided by the pessimism principle, we discard responses whose frequency falls below a threshold $\alpha$, retaining only the subset
\begin{align*}
    \hat \cY_\alpha = \{y \in \hat\cY: \hat \pi(y) \ge\alpha\}.
\end{align*}
Then we query the reward model on the surviving candidates and select the top $k$ responses according to their predicted rewards, $\tilde y_1,\ldots, \tilde y_k = \text{Top-k} \big\{y \in \hat \cY_\alpha: \hat r(y)\big\}$. The following theorem demonstrates the upper bound of BoM.
\begin{theorem}\label{theorem:upper_bound}
Assume that the threshold is $\alpha=3/(4C^*(x))$, and the sampling budget is $N\ge 16 C^*(x) \log \big(kC^*(x)/\epsilonRM^2(x)\big)$. Then the regret of BoM (Algorithm \ref{alg:1}) satisfies
\begin{align*}
    \mathrm{Regret}(x) \le \epsilonopt(x) + O\Big(\sqrt{{C^*(x)\epsilonRM^2(x)}/{k}}\Big).
\end{align*}
\end{theorem}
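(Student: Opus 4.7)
The plan is to split the analysis into two high-probability ``good events'' and a case analysis of the top-$k$ selection. First, I would argue that with high probability the unique optimal $y^*$ survives the frequency filter, i.e., $y^*\in\hat\cY_\alpha$; second, that every surviving response $y\in\hat\cY_\alpha$ has reference mass $\pi_{\text{ref}}(y)\ge \alpha/2=3/(8C^*(x))$. Both are multiplicative Chernoff statements: for the first, $N\hat\pi(y^*)$ is $\mathrm{Binomial}(N,1/C^*(x))$ with mean $N/C^*(x)$, so a standard $(1-1/4)$-deviation bound combined with $N\ge 16C^*(x)\log(kC^*(x)/\epsilon_{\text{RM}}^2(x))$ makes the failure probability polynomially small in $\epsilon_{\text{RM}}^2(x)/(kC^*(x))$. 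Since regret is trivially bounded by $1$, the contribution of the complementary bad event to the expected regret is dominated by $\sqrt{C^*(x)\epsilon_{\text{RM}}^2(x)/k}$.

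Conditioned on these two events, I would decompose the regret by comparing $y^*$ against the top-$k$ by $\hat r$. If $y^*\in\{\tilde y_1,\dots,\tilde y_k\}$, then $\max_i r^*(\tilde y_i)=1$ and $\mathrm{Regret}(x)=0$. Otherwise $y^*\in\hat\cY_\alpha\setminus\{\tilde y_1,\dots,\tilde y_k\}$, so by the top-$k$ rule each selected $\tilde y_i$ satisfies $\hat r(\tilde y_i)\ge \hat r(y^*)\ge 1-\epsilon_{\text{opt}}(x)$ by Assumption~\ref{assump: reward}. Writing $e_i=r^*(\tilde y_i)-\hat r(\tilde y_i)$, this yields $\max_i r^*(\tilde y_i)\ge 1-\epsilon_{\text{opt}}(x)-\min_{i\in[k]}|e_i|$, hence $\mathrm{Regret}(x)\le \epsilon_{\text{opt}}(x)+\min_{i\in[k]}|e_i|$.

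To finish, I would bound the minimum error via the elementary inequality $\min_{i\in[k]} e_i^2\le \frac1k\sum_{i=1}^k e_i^2\le \frac1k\sum_{y\in\hat\cY_\alpha}(r^*(y)-\hat r(y))^2$. On the second good event, every $y\in\hat\cY_\alpha$ has $\pi_{\text{ref}}(y)\ge 3/(8C^*(x))$, so the unweighted sum is at most $(8C^*(x)/3)\sum_{y\in\hat\cY_\alpha}\pi_{\text{ref}}(y)(r^*(y)-\hat r(y))^2$, which by Assumption~\ref{assump:epsilon} is at most $(8/3)C^*(x)\epsilon_{\text{RM}}^2(x)$. Taking square roots gives $\min_i|e_i|=O(\sqrt{C^*(x)\epsilon_{\text{RM}}^2(x)/k})$, matching the claim. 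The same bound also implies $|\hat\cY_\alpha|\le 8C^*(x)/3$ deterministically on this event, which is needed when combining with the top-$k$ selection.

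The main obstacle I anticipate is the second concentration event, i.e., ruling out that some response with very small $\pi_{\text{ref}}(y)$ accidentally accumulates empirical mass $\alpha$. Since $\cY$ can be large, a naive union bound fails; my plan is to split responses by $\pi_{\text{ref}}(y)$: for $y$ with $\pi_{\text{ref}}(y)\in[\alpha/(2e),\alpha/2)$ there are only $O(1/\alpha)=O(C^*(x))$ such $y$, and each is handled by multiplicative Chernoff; for $y$ with $\pi_{\text{ref}}(y)<\alpha/(2e)$, a direct binomial-tail estimate of the form $\binom{N}{N\alpha}\pi_{\text{ref}}(y)^{N\alpha}\le (e\pi_{\text{ref}}(y)/\alpha)^{N\alpha}$, summed against the $\pi_{\text{ref}}$-mass, gives a geometric tail. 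The choice $N\gtrsim C^*(x)\log(kC^*(x)/\epsilon_{\text{RM}}^2(x))$ is precisely tuned so that these bounds are small enough to be absorbed into the target regret.
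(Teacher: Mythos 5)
Your proposal is correct and follows the same skeleton as the paper's proof: a high-probability event under which (i) $y^*$ survives the frequency filter and (ii) every survivor has $\piref(y)\ge\Omega(1/C^*(x))$; then the decomposition $\mathrm{Regret}\le\epsilonopt(x)+\min_i|\hat r(\tilde y_i)-r^*(\tilde y_i)|$ via the top-$k$ rule, the min-versus-RMS-average step, and importance-weighting by $\piref(\tilde y_i)\ge\Omega(1/C^*(x))$ to invoke Assumption~\ref{assump:epsilon}; finally the bad event is absorbed using $\mathrm{Regret}\le1$ and the choice of $N$. The one place you genuinely diverge is the key concentration lemma ruling out low-probability impostors in $\hat\cY_\alpha$, where a naive union bound over $\cY$ fails. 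The paper packs all responses with $\piref(y)<1/(4C^*(x))$ into at most $4C^*(x)$ groups of mass at most $1/(2C^*(x))$ each (proving the group count by a contradiction/merging argument), applies Chernoff to each group's aggregate frequency, and union-bounds over groups. You instead stratify by $\piref(y)$: a Chernoff bound plus union bound over the $O(C^*(x))$ responses in the band $[\alpha/(2e),\alpha/2)$, and for the far tail a direct binomial estimate $\binom{N}{N\alpha}\piref(y)^{N\alpha}\le(e\piref(y)/\alpha)^{N\alpha}$ whose sum over $y$ is controlled by factoring out one power of $\piref(y)$ and using $\sum_y\piref(y)\le1$ (valid since $N\alpha\ge1$). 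Both arguments work; yours is somewhat more elementary and avoids the bin-construction, while the paper's binning yields a single clean union bound and a tidier explicit failure probability $5C^*(x)e^{-N/(32C^*(x))}$. A minor additional simplification on your side is that you only require $y^*$ (rather than all of $\cY_{1/C^*(x)}$) to survive the filter, which is indeed all the regret analysis uses.
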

When $\epsilon_{\mathrm{opt}}(x) \ll \sqrt{C^*(x)\epsilon_{\mathrm{RM}}^2(x)}$, the second term dominates, and consequently the overall regret scales as $1/\sqrt{k}$, consistent with the intuition that increasing $k$ enlarges the candidate set and thereby makes the problem easier. Moreover, for fixed $x$, $k$, and $C^*(x)$, the regret bound converges to $0$ as $N \to \infty$ and $\epsilon_{\mathrm{RM}}(x) \to 0$. This yields the following corollary.
\begin{corollary}
    BoM (Algorithm \ref{alg:1}) is scaling-monotonic.
\end{corollary}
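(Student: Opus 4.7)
The plan is to verify Definition~\ref{def:consistent} directly from Theorem~\ref{theorem:upper_bound}. Fix $k$, the prompt $x$, and the coverage coefficient $C^*(x)$, and let $\delta>0$ be arbitrary. I need to produce $\epsilon_0>0$ and a positive integer $N_0$ so that every sampling budget $N\ge N_0$ and every reward-model error $\epsilonRM(x)\le\epsilon_0$ yield $\text{Regret}(x)\le\delta$; the bound must hold \emph{uniformly} over all $N\ge N_0$, capturing the non-degradation aspect of scaling-monotonicity that is missing for Majority Voting and BoN.

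First, I would set $\epsilon_0 = c\,\delta/\sqrt{C^*(x)}$ for a sufficiently small universal constant $c$ that absorbs the hidden constants in Theorem~\ref{theorem:upper_bound}. This single choice controls both terms of the theorem's upper bound at once: by Assumption~\ref{assump: reward}, $\epsilonopt(x)\le\sqrt{C^*(x)}\,\epsilonRM(x)\le c\delta$, while $\sqrt{C^*(x)\epsilonRM^2(x)/k}\le c\delta/\sqrt{k}\le c\delta$. I would then define $N_0=\bigl\lceil 16\,C^*(x)\log\bigl(k\,C^*(x)/\epsilon_0^2\bigr)\bigr\rceil$, a quantity that depends on $\delta$ (through $\epsilon_0$) and on the fixed parameters $k$ and $C^*(x)$, but not on the particular instance.

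The main obstacle I anticipate is a subtle circularity: the sampling-budget threshold in Theorem~\ref{theorem:upper_bound} is $16\,C^*(x)\log(kC^*(x)/\epsilonRM^2(x))$, which itself depends on $\epsilonRM(x)$ and diverges as $\epsilonRM(x)\to 0$, so a single $N_0$ cannot satisfy this inequality uniformly for all $\epsilonRM(x)\le\epsilon_0$. I would resolve this by exploiting the monotonicity of Assumption~\ref{assump:epsilon}: if the assumption holds with some error bound, it also holds with every larger one, in particular with $\epsilon_0$ itself. Hence, for any instance with true error $\epsilonRM(x)\le\epsilon_0$, I would apply Theorem~\ref{theorem:upper_bound} using the \emph{loosened} bound $\epsilon_0$ in place of $\epsilonRM(x)$; this reduces the required sample size to exactly $N_0$ (so it is met for every $N\ge N_0$) and yields the regret conclusion $\text{Regret}(x)\le\epsilonopt(x)+O(\sqrt{C^*(x)\epsilon_0^2/k})$.

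Combining everything, this inequality together with $\epsilonopt(x)\le\sqrt{C^*(x)}\,\epsilonRM(x)\le\sqrt{C^*(x)}\,\epsilon_0$ gives $\text{Regret}(x)=O(\sqrt{C^*(x)}\,\epsilon_0)=O(c\delta)\le\delta$ once $c$ is taken small enough to dominate the hidden constants from Theorem~\ref{theorem:upper_bound}. Because the argument holds for \emph{every} $N\ge N_0$ (not just a single one), Definition~\ref{def:consistent} is verified in its full uniform form, and BoM is scaling-monotonic.
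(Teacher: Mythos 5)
Your proof is correct and follows the same route as the paper, which simply observes that the upper bound of Theorem~\ref{theorem:upper_bound} tends to $0$ as $N\to\infty$ and $\epsilonRM(x)\to0$ and lets the corollary follow. Your additional care with the quantifier order --- in particular, noticing that the sample-size threshold $16C^*(x)\log(kC^*(x)/\epsilonRM^2(x))$ diverges as $\epsilonRM(x)\to0$ and resolving this by loosening Assumption~\ref{assump:epsilon} to the fixed bound $\epsilon_0$, together with the bound $\epsilonopt(x)\le\sqrt{C^*(x)}\,\epsilonRM(x)$ from Assumption~\ref{assump: reward} --- fills in details the paper leaves implicit, and is exactly the right way to make the one-line argument rigorous.
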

\noindent\textbf{Computational Complexity.} According to Theorem \ref{theorem:upper_bound}, the BoM algorithm requires approximately $\tilde\Omega(C^*(x))$ samples to achieve low regret. In comparison, Theorem 3.4 in Huang et al. (2025) shows that when $k = 1$, the Best-of-$N$ (BoN) algorithm also requires $\tilde \Theta(C^*(x))$ samples. This means for Pass@$k$ inference, BoM achieves a better regret bound with a $1/\sqrt k$ improvement without incurring additional generation cost. Moreover, BoM only queries the reward model for a filtered subset of candidates (see Algorithm \ref{alg:1}, Line \ref{alg:line5}), which can reduce the number of reward evaluations.

\begin{proof}[Proof Sketch of Theorem \ref{theorem:upper_bound}]

The crucial step of BoM involves the construction of $\hat\cY_{\alpha}$ to approximate the set of all responses $y$ with $\piref(y|x)\ge\alpha$, denoted by $\cY_\alpha$. The following two properties of $\cY_\alpha$ makes it preferable as the set of candidates: Firstly, if $\tilde y_i\in\cY_{\alpha}(x)$ for all $i\in[k]$, we have an upper bound of the minimum estimation error $\min_{i\in[k]}\Delta_i$, where $\Delta_i=|\hat r(x, \tilde y_i)-r^*(x, \tilde y_i)|$:
\begin{align}\label{eq:Delta_i_bound}
\min_{i\in[k]}\Delta_i\le\sqrt{\sum\nolimits_{i=1}^k\Delta_i^2/k}\le\sqrt{\sum\nolimits_{i=1}^k\piref(\tilde y_i|x)\Delta_i^2/(\alpha k)}\le\sqrt{\epsilonRM^2(x)/(\alpha k)},
\end{align}
where we used the property $\piref(\tilde y_i|x)\ge\alpha$ in the second inequality. Secondly, since $\piref(y^*|x)\ge1/C^*(x)$, we have $y^*\in\cY_{1/C^*(x)}$. Therefore, if $\hat\cY_{\alpha}(x)=\cY_{1/C^*(x)}(x)$, the algorithm either outputs $y^*$ among the $k$ submitted responses, incurring zero regret, or outputs $k$ responses with $\hat r(x, \tilde y_i)\ge \hat r(x, y^*)$, where the regret can be decomposed as
\begin{align*}
&r^*(x, y^*)-r^*(x, \tilde y_i)\le\underbrace{|r^*(x, y^*)-\hat r(x, y^*)|}_{\epsilonopt(x)}+\underbrace{[\hat r(x, y^*)-\hat r(x, \tilde y_i)]}_{\le0}+\underbrace{|\hat r(x, \tilde y_i)-r^*(x, \tilde y_i)|}_{\Delta_i}.
\end{align*}
We take the minimum, plug in \eqref{eq:Delta_i_bound}, and obtain
\begin{align*}
r^*(x, y^*)-\max_{i\in[k]}r^*(x, \tilde y_i)\le\epsilonopt(x)+\min_{i\in[k]}\Delta_{\tilde y_i}\le\epsilonopt(x)+\sqrt{4C^*(x)\epsilonRM^2(x)/k}.
\end{align*}
However, without direct access to $\piref$, we use the empirical frequency $\hat \pi$ instead of $\piref$ in the construction of $\hat \cY_{\alpha}$, making $\hat\cY_\alpha$ an \textit{approximation} of $\cY_\alpha$. To extend the two properties of $\cY_{\alpha}$ to $\hat\cY_{\alpha}$, we require the following event that sandwiches $\hat\cY_{3/(4C^*(x))}(x)$ with $\cY_{1/C^*(x)}(x)$ and $\cY_{1/(4C^*(x))}(x)$:
\begin{align*}
\cE: \cY_{1/C^*(x)}(x)\subset\hat\cY_{3/(4C^*(x))}(x)\subset\cY_{1/(4C^*(x))}(x).
\end{align*}
Under event $\cE$, $\alpha$ can be set as $1/(4C^*(x))$ in \eqref{eq:Delta_i_bound}. The complete expectation formula gives
\begin{align*}
\mathrm{Regret}(x)&=\EE\Big[r^*(x, y^*)-\max_{i\in[k]}r^*(x, \tilde y_i)\Big|\cE\Big]\cdot\PP(\cE)+\EE\Big[r^*(x, y^*)-\max_{i\in[k]}r^*(x, \tilde y_i)\Big|\neg\cE\Big]\cdot\PP(\neg\cE)\\
&\le\epsilonopt(x)+\sqrt{4C^*(x)\epsilonRM^2(x)/k}+\PP(\neg\cE),
\end{align*}
so it remains to characterize the probability of $\cE$.

The probability of $\cY_{1/C^*(x)}(x)\not\subset\hat\cY_{3/(4C^*(x))}(x)$ can be characterized by first bounding $\PP(y\not\in\hat\cY_{3/(4C^*(x))}(x))$ for any $y\in\cY_{1/C^*(x)}(x)$ using the Chernoff bound, and then applying the union bound with the crucial observation of $|\cY_{1/C^*(x)}(x)|\le C^*(x)$. When characterizing $\PP(\hat\cY_{3/(4C^*(x))}(x)\not\subset\cY_{1/(4C^*(x))}(x))$, we can similarly use the Chernoff bound in $\PP(y\in\hat\cY_{3/(4C^*(x))}(x))$ for any $y\in\cY(x)\backslash\cY_{1/(4C^*(x)}(x)$. However, the union bound does not hold because the cardinality of the set $\cY(x)\backslash\cY_{1/(4C^*(x)}(x)$ is unknown. To resolve this issue, we assign elements of $\cY(x)\backslash\cY_{1/(4C^*(x)}(x)$ into ``bins'' $\{G_j\}$, each with capacity $1/(2C^*(x))$, i.e., $\piref(G_j|x)\le1/(2C^*(x))$.
The smallest number of bins is no more than $4C^*(x)$ because any two bins with $\piref(G_j|x)\le1/(4C^*(x))$ can be merged. With this assignment, we can bound $\PP(G_j\cap\hat\cY_{3/(4C^*(x))}(x)\neq\varnothing)$ with the Chernoff bound, and then use the union bound with the bins, which resolves the problem because the number of bins is bounded.
\end{proof}
\section{General Lower Bound}\label{sec:lower}
In this section, we establish a lower bound that highlights the fundamental factors influencing the Pass@$k$ inference problem. Specifically, the bound depends on the coverage coefficient $C^*(x)$, the reward model estimation error $\epsilonRM^2(x)$ and $\epsilonopt(x)$, and the number of candidates~$k$. It matches the upper bound in Theorem \ref{theorem:upper_bound}, which indicates that the algorithm BoM is minimax optimal. 
\begin{theorem}\label{theorem:lower_bound_general}
For a given prompt $x$, assume that $C^*(x)\ge2k$. Then for any algorithm $\cA$ for the Pass@$k$ inference problem, there exists an instance $\cI=(\cX, \cY, \pi^*, r^*, \piref, \hat r)$ such that the coverage coefficient is $C^*(x)$, and $(r^*,\hat r)$ satisfies Assumptions \ref{assump:epsilon} and \ref{assump: reward}. Moreover, and regret can be lower bounded by
\begin{align*}
\mathrm{Regret}(x)=\Omega\Big(\epsilonopt(x) + \sqrt{C^*(x)\epsilonRM^2(x)/k}\Big).
\end{align*}
\end{theorem}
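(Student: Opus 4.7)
My approach is to construct, for each of the two additive terms in the lower bound, a family of hard instances that share the same reference policy and estimated reward model (so that the data seen by any algorithm is identically distributed across them), and then extract a hard instance via an averaging argument over the identity of $y^*$.

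Let $n = C^*(x)$ and $m = 2k$ (feasible since $C^*(x)\ge 2k$); take $\cY = \{y_1,\dots,y_n\}$ and let $\piref$ be uniform on $\cY$, so that $C^*(x) = n$. Fix $T = \{y_1,\dots,y_m\}$. For the $\sqrt{C^*\epsilonRM^2/k}$ term, for each $j\in[m]$ define instance $I_j$ by $y^* = y_j$, $r^*_j(y_j) = 1$, $r^*_j(y_i) = 1-\Delta$ for $i\in T\setminus\{j\}$, $r^*_j(y_i) = 0$ otherwise, together with a $j$-independent estimate $\hat r(y_i) = 1$ for $i\in T$ and $\hat r(y_i) = 0$ otherwise. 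Setting $\Delta = \sqrt{n/(m-1)}\cdot\epsilonRM$ saturates Assumption~\ref{assump:epsilon}, since $\EE_{y\sim\piref}[(r^*_j(y)-\hat r(y))^2] = (m-1)\Delta^2/n = \epsilonRM^2$; Assumption~\ref{assump: reward} holds with $\epsilonopt = 0$ because $\hat r(y_j) = r^*_j(y_j) = 1$ and $y_j$ is the unique maximizer of $r^*_j$.

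Since $\piref$ and $\hat r$ are identical across $\{I_j\}_{j\in[m]}$, the algorithm's observations, and hence its output distribution, are independent of $j$. For any realized output $\{\tilde y_1,\dots,\tilde y_k\}$, let $A = |\{i:\tilde y_i\in T\}|\le k$; then under $I_j$ the regret equals $\Delta\cdot\ind[y_j\notin\text{output}]$ when $A\ge 1$ (the best surviving $r^*_j$ value is $1-\Delta$), and equals $1\ge\Delta$ when $A = 0$. Averaging over $j$ uniform on $[m]$ yields $\EE_j[\mathrm{Regret}(\cA, I_j)] \ge \Delta(m-A)/m \ge \Delta(m-k)/m = \Delta/2$, so by pigeonhole some $j^*$ satisfies $\mathrm{Regret}(\cA, I_{j^*}) \ge \Delta/2 = \Omega(\sqrt{C^*(x)\epsilonRM^2/k})$.

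The $\Omega(\epsilonopt)$ contribution follows from a parallel construction in which $\hat r(y_i) = 1-\epsilonopt$ on $T$ (and $0$ outside) while $r^*_j(y_j) = 1$, $r^*_j(y_i) = 1-\epsilonopt$ for $i\in T\setminus\{j\}$, and $r^*_j = 0$ otherwise. Then $|\hat r(y_j)-r^*_j(y_j)| = \epsilonopt$ and $\EE_\piref[(r^*_j-\hat r)^2] = \epsilonopt^2/n$, which is exactly consistent with the implied compatibility $\piref(y^*)\epsilonopt^2\le\epsilonRM^2$; the same averaging step gives regret at least $\epsilonopt/2$. Combining the two bounds via $\max(a,b)\ge(a+b)/2$ yields the desired $\Omega(\epsilonopt + \sqrt{C^*\epsilonRM^2/k})$. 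The main subtlety is the bookkeeping in the averaging step when the output contains elements $\tilde y_i\notin T$; however, since $r^*_j$ vanishes outside $T$, any such output only enlarges the regret, so the uniform bound $\Delta(m-A)/m\ge\Delta/2$ is preserved and no separate argument is needed.
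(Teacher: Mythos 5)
Your proposal is correct and follows essentially the same route as the paper: an averaging-hammer argument over $M=2k$ instances that share $\piref$ and $\hat r$ and differ only in which element of the high-reward set is the true optimum, with the gap calibrated as $\Delta=\sqrt{C^*(x)\epsilonRM^2(x)/(M-1)}$ and the $(M-k)/M$ counting yielding the $1/\sqrt{k}$ scaling, plus a parallel family for the $\Omega(\epsilonopt(x))$ term combined via $\max(a,b)\ge(a+b)/2$. The only cosmetic difference is that you spread $\piref$ uniformly over $C^*(x)$ responses, while the paper places the leftover mass on a single sink response $y_0$.
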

Theorem \ref{theorem:lower_bound_general} shows that the term $\epsilonopt(x)$ is unavoidable in the Pass@$k$ inference problem and does not diminish as the number of candidates $k$ increases. In contrast, the component associated with the expected squared loss, $\epsilonRM(x)$, decreases at a rate of $1/\sqrt{k}$. This bound matches the upper bound for BoM (Theorem \ref{theorem:upper_bound}), demonstrating that BoM is minimax optimal.
\section{Experiments}
In this section, we empirically verify the effectiveness of our proposed BoM algorithm on mathematical reasoning tasks. 

\subsection{Experiment Setup}
\label{sec:experiment}
\noindent\textbf{Models and Datasets.} We use Qwen3-4B-Instruct-2507 (Qwen3-4B,~\citealp{qwen3technicalreport}) and Qwen2.5-Math-1.5B-Instruct (Qwen2.5-1.5B, \citealp{yang2024qwen2}) as the reference policy $\piref$\footnote{Please see Appendix~\ref{app:addtional-exp} for results on additional models.}. We adopt AceMath-7B-RM~\citep{acemath2024} as the reward model $\hat r$, a mathematical reward model trained on a large corpus generated by different language models which is selected due to its strong performance and moderate size. We adopt the widely used GSM8K~\citep{cobbe2021training}, MATH-500~\citep{hendrycks2021measuring}, and AIME24\footnote{\url{https://huggingface.co/datasets/di-zhang-fdu/AIME\_1983\_2024}} dataset as our testing corpus.
We first sample $N$ trajectories and call the reward model to evaluate each trajectory.
The answers are then extracted from the trajectories and clustered by mathematical equivalence.
For each answer group, we use the average of the rewards of all the corresponding trajectories as the reward of this group.
We also calculate the frequency of each answer group as an estimation of $\piref(\cdot)$.

\noindent\textbf{Method and Baselines.} Given a specific $k$, we consider our method BoM, and two baselines, majority voting and BoN. In BoM, we set a threshold $\alpha$ and select the $k$ answers (up to mathematical equivalence) with highest reward score and frequency greater than $\alpha$. In BoN, we directly select the $k$ answers (up to mathematical equivalence) with highest rewards. As for majority voting, we directly select $k$ answers (up to mathematical equivalence) with highest frequency.

\subsection{Results}

\begin{figure*}[ht]
\centering   
\subfigure[GSM8K]{\label{fig:gsm8k}\includegraphics[width=0.32\textwidth]{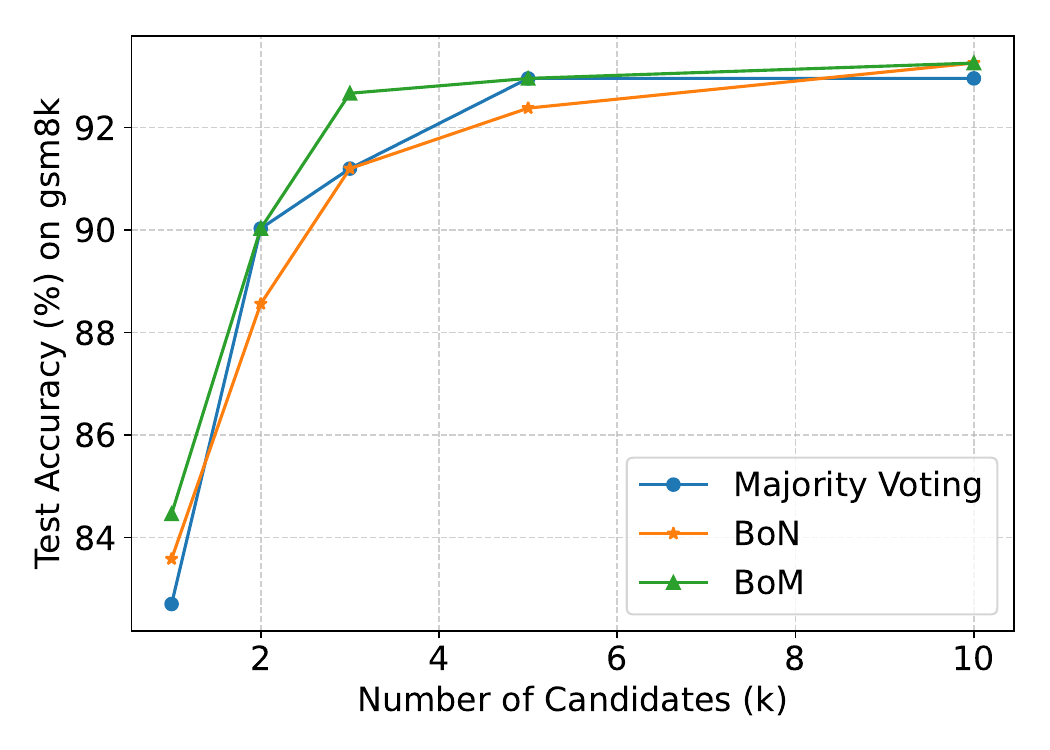}}
\subfigure[MATH-500]{\label{fig:math500}\includegraphics[width=0.32\textwidth]{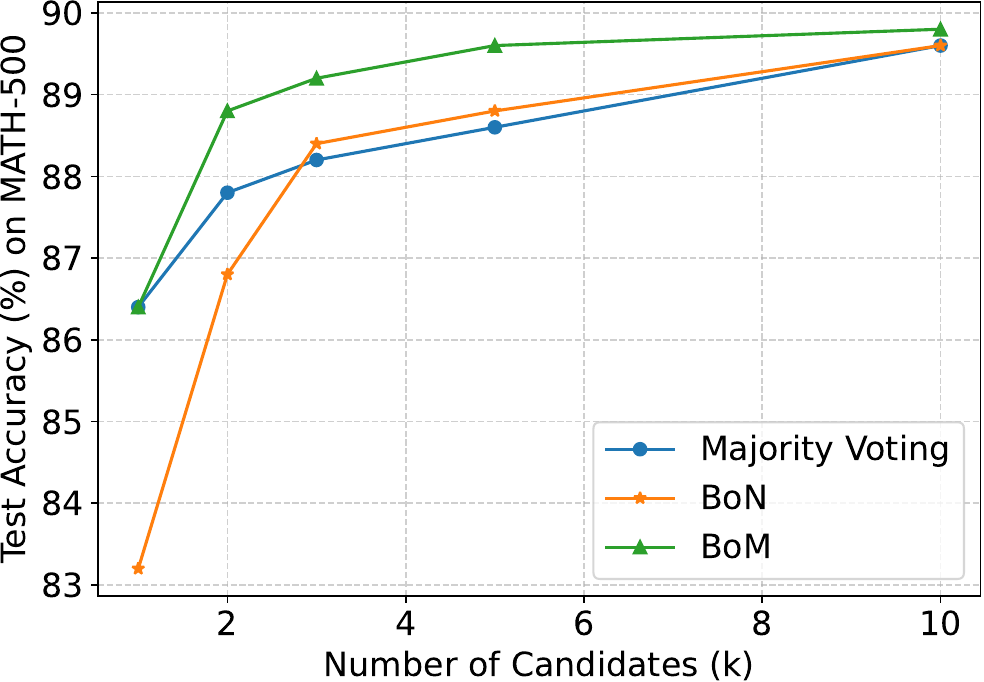}}
\subfigure[AIME24]{\label{fig:aime24}\includegraphics[width=0.32\textwidth]{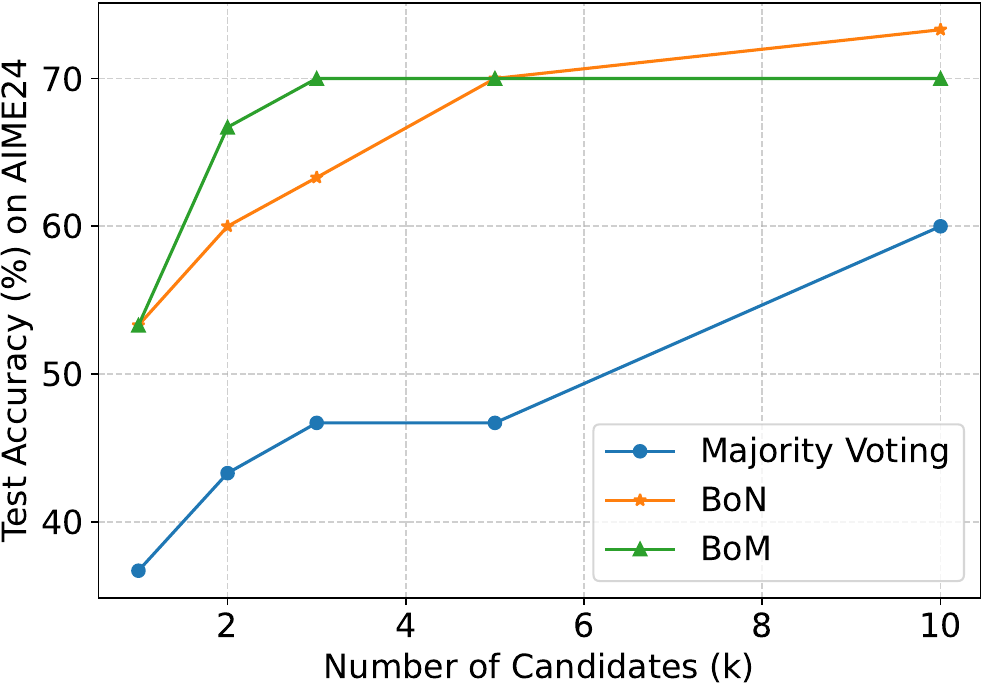}}
\caption{Results with different $k$ on Qwen3-4B. BoM consistently outperforms the baselines on MATH-500 for all $k$ and on AIME24, GSM8K when $k$ is small, and matches the performance of baselines in other settings.}
\label{fig:scaling-k}
\end{figure*}

\begin{figure*}[ht]
\centering   
\subfigure[GSM8K]{\label{fig:gsm8k-qwen2.5-1.5b}\includegraphics[width=0.32\textwidth]{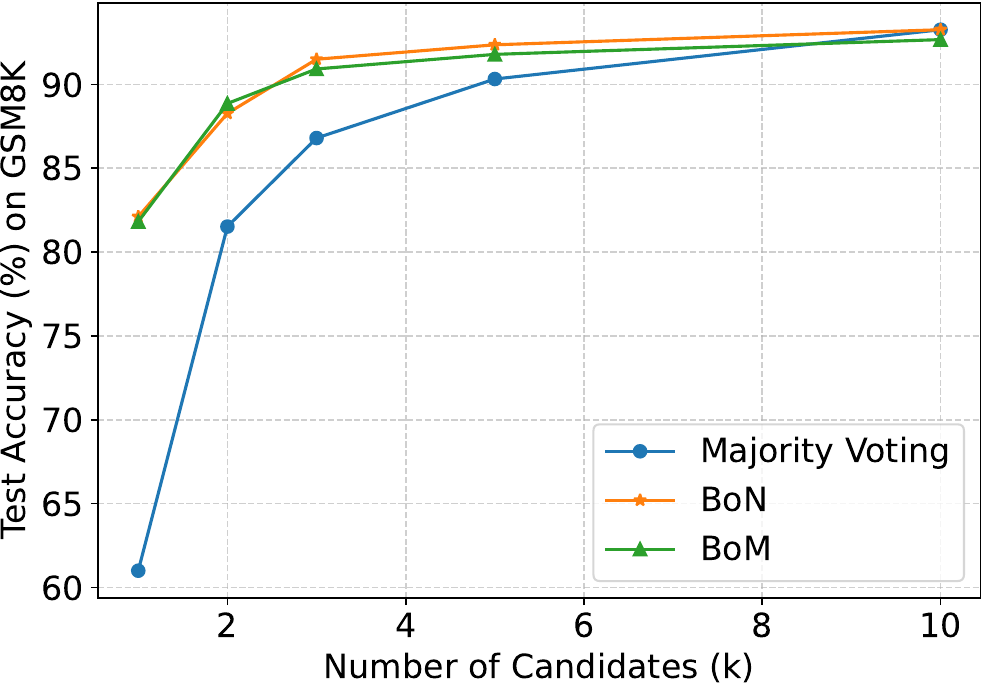}}
\subfigure[MATH-500]{\label{fig:math500-qwen2.5-1.5b}\includegraphics[width=0.32\textwidth]{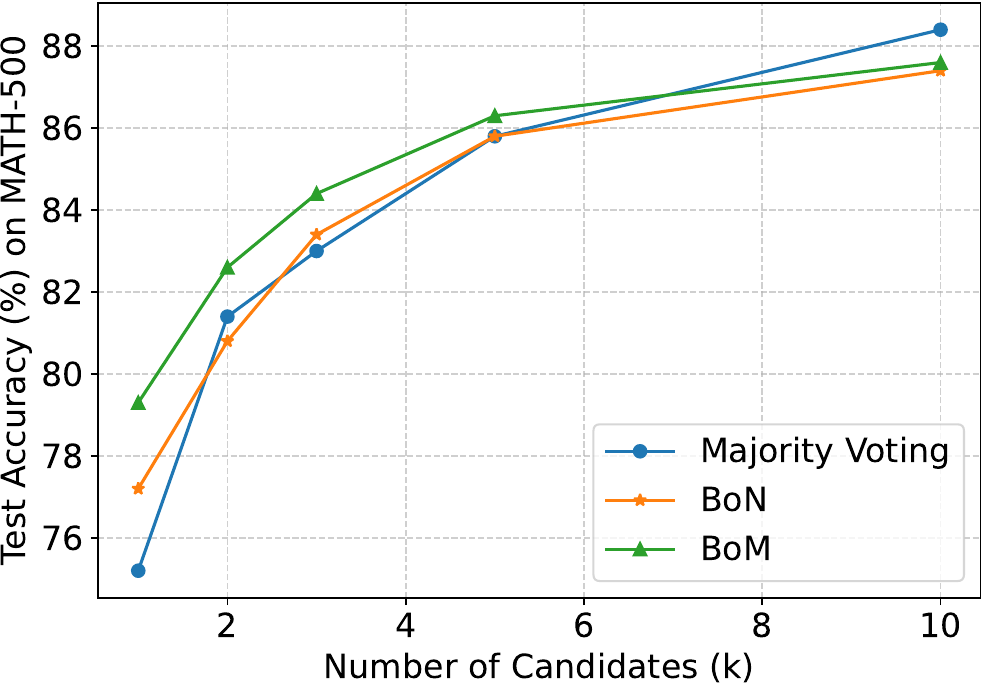}}
\subfigure[AIME24]{\label{fig:aime24-qwen2.5-1.5b}\includegraphics[width=0.32\textwidth]{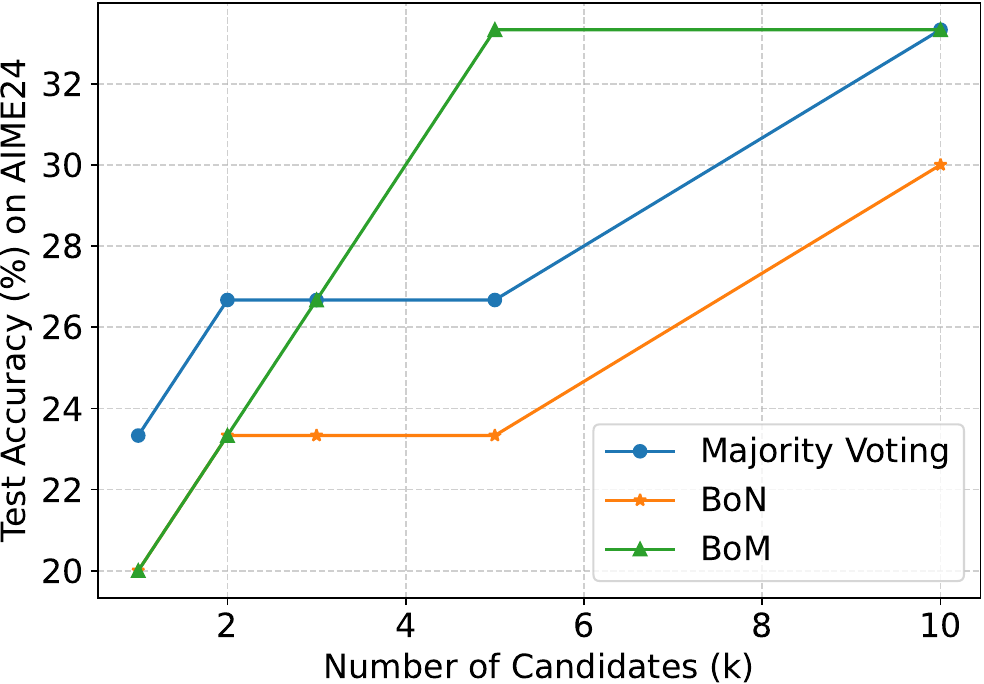}}
\caption{The results of different $k$ with $N=500$ on Qwen2.5-1.5B.}
\label{fig:scaling-k-qwen2.5-1.5b}
\end{figure*}
\noindent\textbf{Results with varying $k$.}
\label{sec:exp results}
We first plot the results for $k\in\{1, 2, 3, 5, 10\}$ in Figures~\ref{fig:gsm8k} and~\ref{fig:gsm8k-qwen2.5-1.5b} for GSM8K, Figures~\ref{fig:math500} and~\ref{fig:math500-qwen2.5-1.5b} for MATH-500, and Figures~\ref{fig:aime24} and \ref{fig:aime24-qwen2.5-1.5b} for AIME24.
We sample $N=2000$ responses for the GSM8K dataset and the Qwen3-4B model, and set $N=500$ for all other experiment settings.
For the Qwen3-4B model, on MATH-500, the performance of BoM consistently outperforms the baselines. On GSK8K and AIME24, BoM also shows a large improvement over majority voting and outperforms BoN for small $k$. These results empirically verify the effectiveness of the BoM algorithm.
For the Qwen2.5-1.5B model, BoM matches the performance of BoN on GSM8k and outperforms BoN on MATH-500 and AIME24. The performance of BoM also surpasses majority voting on GSM8k and MATH-500 with $k \le 5$. These results show that BoM demonstrates a better overall performance over baselines when $k$ is small.

\begin{figure*}[ht]
\centering   
\subfigure[$k=1$]{\label{fig:k=1}\includegraphics[width=0.32\textwidth]{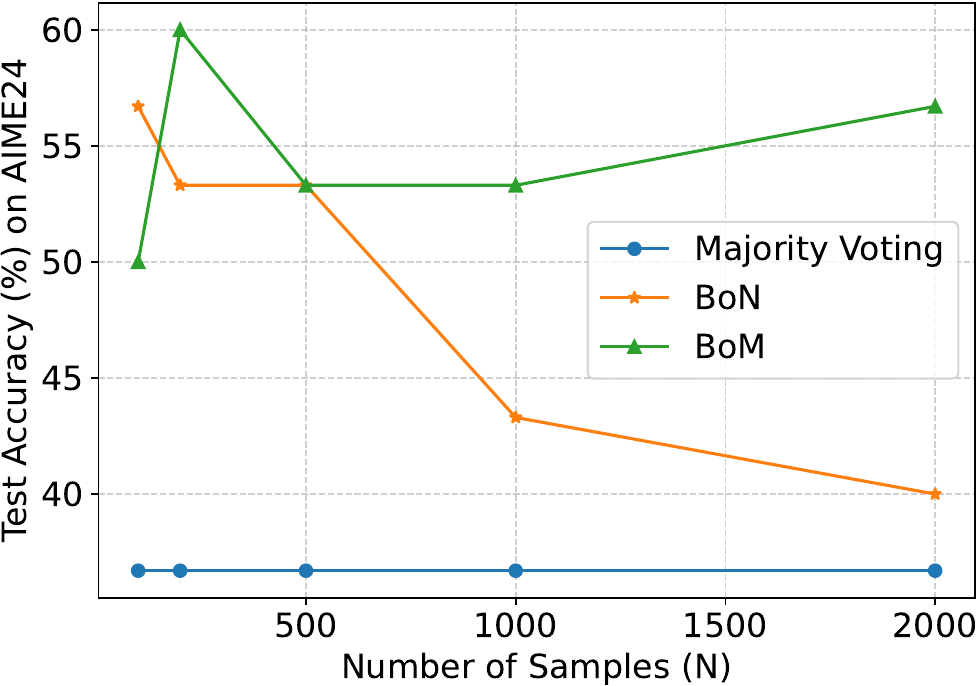}}
\subfigure[$k=3$]{\label{fig:k=3}\includegraphics[width=0.32\textwidth]{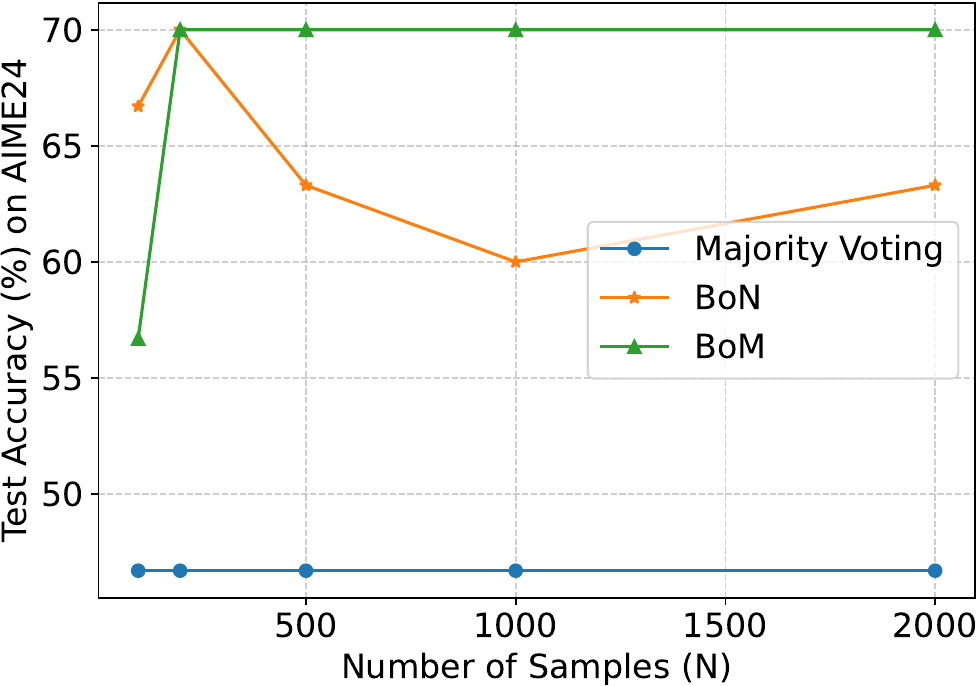}}
\subfigure[$k=5$]{\label{fig:k=5}\includegraphics[width=0.32\textwidth]{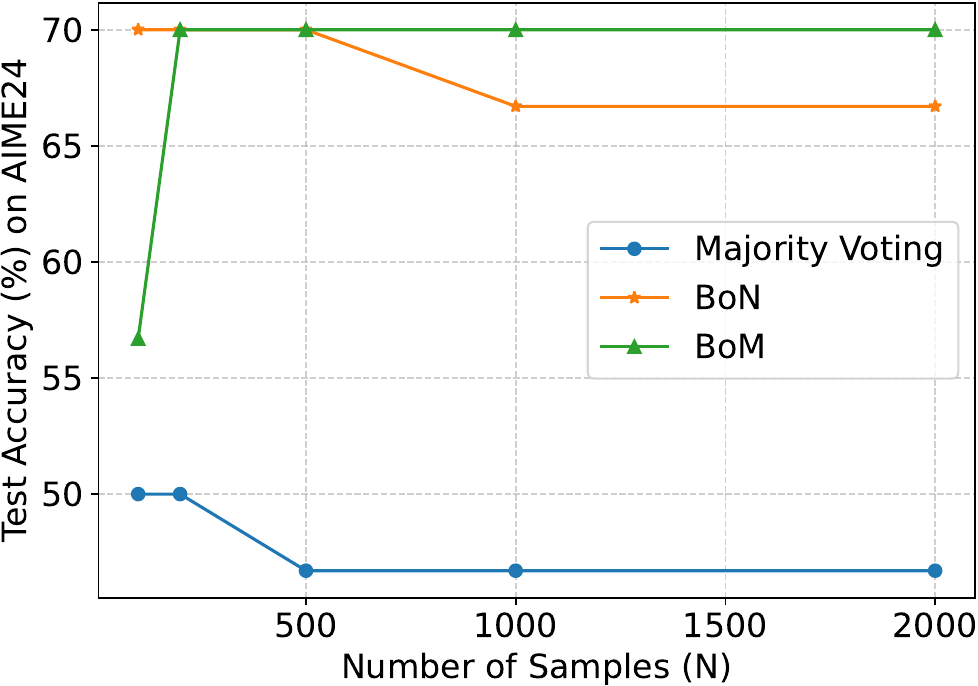}}
\caption{The results with fixed $k$ and different $N$. When $N$ increases, the performance of BoN is likely to decrease over all the $k$. The performance of Majority voting remains at a low level. Among them, BoM has a more consistent performance and outperforms baselines with larger $N$.}
\label{fig:scaling-n}
\end{figure*}

\noindent\textbf{Results with varying $N$.} We also study the performance of the three methods under different sample sizes. We conduct the experiments on the AIME24 dataset and the Qwen3-4B model. We vary $N$ between 100 and 2000, and show the results with $k=1,3,5$. Except for the case of $N=100$ where the threshold of BoM is set to $\alpha = 0.015$, we use $\alpha = 0.005$ in all other settings. We compile the results in Figure~\ref{fig:scaling-n}.
The performance of majority voting remains consistently low, which aligns with Theorem~\ref{thm: lowerWM}, demonstrating that majority voting incurs constant regret and does not benefit from increased sample size. The performance of BoN tends to degrade as $N$ increases. In contrast, when $N \ge 200$, BoM consistently outperforms both baselines and does not decrease significantly with the increase of $N$. This observation is consistent with our theoretical results, as BoM is scaling-monotonic.



\section{Conclusion and Future Work}

In this work, we demonstrate the scaling laws of the Pass@$k$ inference problem by displaying the minimax lower bound of the regret and proposing the algorithm BoM with regret matching the lower bound. We also show that BoM has the advantage of scaling monotonicity compared with majority voting and BoN, which makes BoM preferable when scaling up the generation budget. For future work, we plan to extend the study of inference strategies from the optimization of inference-time performance to the impact of combining the trajectory sampling process during the post-training of LLMs with Pass@$k$ inference strategies.

\appendix

\section{Proof of Theorem \ref{theorem:upper_bound}}
\label{sec:upper}
In this section, we will prove Theorem \ref{theorem:upper_bound}, which provides the theoretical upper bound of Algorithm \ref{alg:1}.
To start with, for any $\alpha>0$, we denote
\begin{align*}
\cY_\alpha(x)=\{y\in\cA(x):\piref(y|x)\ge\alpha\},
\end{align*}
indicating the set of responses with relatively high probability for $\piref$. Using the definition of the coverage coefficient \eqref{eq: coverage}, we have $y^* \in \cY_{\alpha}(x)$ as long as $\alpha \ge 1/C^*(x)$. Next, we will build the relationship between the empirical set $\hat \cY_{\alpha}(x)$ and $\cY_\alpha(x)$.
Denote $\cE$ as the event such that
\begin{align*}
\cY_{1/C^*(x)}(x)\subset\hat\cY_{3/(4C^*(x))}\subset\cY_{1/(4C^*(x))}(x).
\end{align*}
Our proof consists of two parts: 

\textbf{Step 1: }We first show that $\cE$ holds with high probability.

\textbf{Step 2:} Provided that $\cE$ holds, since $y^*\in\cY_{1/C^*(x)}(x)$, we have $y^*\in\hat\cY_{3/(4C^*(x))}$; furthermore, since $\tilde y_i\in\hat\cY_{3/(4C^*(x))}$, we have $\tilde y_i\in\cY_{1/(4C^*(x))}(x)$, so $\piref(\tilde y_i|x)\ge1/(4C)$ for every submitted response $\tilde y_i$. We can then characterize $\Delta_i=|r^*(x, \tilde y_i)-\hat r(x, \tilde y_i)|$ using the definition of the estimation error $\epsilonRM^2$. If $y^*\in\{\tilde y_1, \dots, \tilde y_k\}$, then the regret is zero; if $y^*\not\in\{\tilde y_1, \dots, \tilde y_k\}$, then using Assumption \ref{assump: reward}, we have
\begin{align*}
r^*(x, y^*)-r^*(x, \tilde y_i)\le\underbrace{|r^*(x, y^*)-\hat r(x, y^*)|}_{\epsilonopt(x)}+\underbrace{[\hat r(x, y^*))-\hat r(x, \tilde y_i)]}_{\le 0}+\underbrace{|\hat r(x, \tilde y_i)-r^*(x, \tilde y_i)|}_{\Delta_i}.
\end{align*}
Combining these parts together, we complete the proof of Theorem \ref{theorem:upper_bound}.

We now get into the details of the proof. The following lemma states that the event of $\cE$ will occur with high probability:
\begin{lemma}\label{lemma:high_prob_event}
$\cE$ holds with probability at least $1-5C^*(x)e^{-N/(32C^*(x))}$.
\end{lemma}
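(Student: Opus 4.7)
\textbf{Proof plan for Lemma \ref{lemma:high_prob_event}.} The plan is to decompose the complement $\neg\cE$ into the two inclusion-failure events
\begin{align*}
\cE_1: \cY_{1/C^*(x)}(x)\not\subset\hat\cY_{3/(4C^*(x))},\qquad \cE_2: \hat\cY_{3/(4C^*(x))}\not\subset\cY_{1/(4C^*(x))}(x),
\end{align*}
bound each by multiplicative Chernoff concentration, and combine via a union bound. The whole argument exploits the fact that $N\hat\pi(y)$ is $\mathrm{Binomial}(N,\piref(y|x))$, plus two counting arguments that turn the unknown cardinality $|\cY(x)|$ into quantities controlled by $C^*(x)$.

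For $\cE_1$, I fix $y$ with $p:=\piref(y|x)\ge 1/C^*(x)$. Because $3/(4C^*(x))\le 3p/4$, the lower-tail multiplicative Chernoff bound applied with relative deviation $\delta=1/4$ gives $\PP[\hat\pi(y)<3/(4C^*(x))]\le\exp(-pN/32)\le\exp(-N/(32C^*(x)))$. The crucial counting observation is that $|\cY_{1/C^*(x)}(x)|\le C^*(x)$, since $\sum_{y\in\cY_{1/C^*(x)}(x)}\piref(y|x)\le 1$ and each summand is at least $1/C^*(x)$. A union bound over this set then yields $\PP[\cE_1]\le C^*(x)\exp(-N/(32C^*(x)))$.

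The main obstacle is $\cE_2$: its potential witnesses lie in $\cY(x)\setminus\cY_{1/(4C^*(x))}(x)$, whose cardinality cannot be controlled by $C^*(x)$ alone, so a naive union bound collapses. Following the sketch, I partition this set greedily into bins $G_1,\dots,G_M$ such that $\piref(G_j|x)\le 1/(2C^*(x))$ for every $j$. Since every element of this set has $\piref$-mass strictly less than $1/(4C^*(x))$, any two bins whose masses are both at most $1/(4C^*(x))$ could be merged without violating the cap; hence in the minimal partition at most one bin has mass below $1/(4C^*(x))$, forcing $M\le 4C^*(x)$.

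For each bin $G_j$, I use the key reduction that $\{G_j\cap\hat\cY_{3/(4C^*(x))}\neq\varnothing\}$ implies $\sum_{y\in G_j}\hat\pi(y)\ge 3/(4C^*(x))$, because a single response inside $G_j$ with frequency exceeding the threshold already contributes that much aggregated empirical mass. The aggregated empirical mass of $G_j$ is again $\mathrm{Binomial}(N,q_j)/N$ with $q_j\le 1/(2C^*(x))$, and the target threshold exceeds $(3/2)q_j$, so the upper-tail multiplicative Chernoff bound gives a concentration estimate of the form $\exp(-\Omega(N/C^*(x)))$, which I tune so that the constant matches $1/32$. A union bound over $M\le 4C^*(x)$ bins produces $\PP[\cE_2]\le 4C^*(x)\exp(-N/(32C^*(x)))$, and summing with $\PP[\cE_1]$ yields the stated $5C^*(x)\exp(-N/(32C^*(x)))$ bound.
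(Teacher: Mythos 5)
Your proposal is correct and follows essentially the same route as the paper's proof: the same split into the two inclusion failures, the lower-tail Chernoff bound plus the cardinality bound $|\cY_{1/C^*(x)}(x)|\le C^*(x)$ for the first, and the same greedy binning of $\cY\setminus\cY_{1/(4C^*(x))}(x)$ into at most $4C^*(x)$ groups of mass at most $1/(2C^*(x))$, with the element-to-group frequency reduction and an upper-tail Chernoff bound, for the second. The only cosmetic difference is that you justify $M\le 4C^*(x)$ by a merging argument on a minimal partition while the paper argues by contradiction within the greedy construction; both are the same counting in substance.
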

\begin{proof}
The proof consists of two parts that characterize the probabilities of $\cY_{1/C(x)}(x)\not\subset\hat\cY_{3/(4C^*(x))}$ and $\hat\cY_{3/(4C^*(x))}\not\subset\cA_{1/(4C^*(x))}(x)$, respectively:

\noindent\textbf{Part I: Probability of $\cY_{1/C^*(x)}(x)\not\subset\hat\cY_{3/(4C^*(x))}$.}
We first fix any $y\in\cY_{1/C^*(x)}(x)$. By Chernoff bound, we have
\begin{align}\label{eq:high_prob_pihat}
\PP\big(\hat\pi(y)<3/(4C^*(x))\big)\le\exp\Big(-\frac{N\piref(y|x)}{2}\Big(1-\frac{3}{4C^*(x)\piref(a|x)}\Big)^2\Big)\le e^{-N/(32C^*(x))},
\end{align}
where the first inequality holds due to the Chernoff bound, and the second inequality holds because $\piref(y|x)\ge1/C^*(x)$. Applying the union bound to all $y\in\cY_{1/C^*(x)}(x)$, we have
\begin{align}
\PP\big(\cY_{1/C^*(x)}(x)\not\subset\hat\cY_{3/(4C^*(x))}\big)&=\PP\bigg(\bigvee_{y\in\cY_{1/C^*(x)}(x)}\ind[\hat\pi(y)\le3/(4C^*(x))]\bigg)\nonumber\\
&\le\sum_{y\in\cY_{1/C^*(x)}(x)}\PP\big(\hat\pi(y)<3/(4C^*(x))\big)\nonumber\\
&\le1-\big|\cY_{1/C^*(x)}(x)\big|\cdot e^{-N/(32C^*(x))}\nonumber\\
&\le1-C^*(x)e^{-N/(32C^*(x))},\label{eq:high_prob_1}
\end{align}
where the first inequality holds due to the union bound, the second inequality holds due to \eqref{eq:high_prob_pihat}, and the last inequality holds because $|\cY_{1/C^*(x)}(x)|\le C^*(x)$.

\noindent\textbf{Part II: Probability of $\hat\cY_{3/(4C^*(x))}\not\subset\cA_{1/(4C^*(x))}(x)$.}
We cannot use the same union bound as \eqref{eq:high_prob_1} because the cardinality of the set to take union bound $\cY\backslash\cY_{1/(4C^*(x))}(x)$ is unknown. To resolve this issue, we first partition $\cY\backslash\cY_{1/(4C^*(x))}(x)$ into groups, then apply Chernoff bound to each group, and finally apply the union bound to the groups. This technique resolves the problem because the number of groups is in the order of $\cO(C^*(x))$, and the union bound goes through without incurring the cardinality of $\cY\backslash\cY_{1/(4C^*(x))}(x)$.

In detail, suppose that $\cY\backslash\cY_{1/(4C^*(x))}(x)=\{y_i\}_{i\ge1}$. We start with a single group $G_1=\varnothing$, and add $y_i$ to one of the groups sequentially. For each response $y_i\in\cY\backslash\cY_{1/(4C^*(x))}(x)$, if there exists group $G_j$ such that
\begin{align}\label{eq:group_condition}
\piref(y_i|x)+\sum_{y\in G_j}\piref(y|x)\le\frac1{2C^*(x)},
\end{align}
then we update $G_j$ with $G_j\cup\{a_i\}$ where $j$ is the smallest index that satisfies \eqref{eq:group_condition}. Otherwise, we create a new group $\{a_i\}$. From the construction of the groups, we can easily see that the probability of any group $G_j$ under the reference model satisfies
\begin{align}\label{eq:prob_group_ub}
\piref(G_i|x)=\sum_{a\in G_j}\piref(a|x)\le\frac1{2C^*(x)}.
\end{align}
Furthermore, the total number of groups $M$ should be no larger than $4C^*(x)$ because otherwise, suppose that \eqref{eq:group_condition} does not holds for $y_i$ and any existing group $ G_j(j\in[M])$ where $M>4C^*(x)-1$, i.e.,
\begin{align}\label{eq:prob_group_lb}
\sum_{y\in G_j}\piref(y|x)>\frac1{2C^*(x)}-\piref(y_i|x)>\frac1{4C^*(x)},
\end{align}
where the last inequality holds because $\piref(a)<1/(4C^*(x))$. We then have
\begin{align*}
1&=\sum_{y\in\cY}\piref(y|x)\\
&\ge\bigg[\piref(y_i|x)+\sum_{y\in G_1}\piref(y|x)\bigg]+\sum_{j=2}^M\bigg[\sum_{y\in G_j}\piref(y|x)\bigg]\\
&\ge\frac1{2C^*(x)}+(M-1)\cdot\frac1{4C^*(x)}\\
&>\frac1{2C^*(x)}+(4C^*(x)-1-1)\cdot\frac1{4C^*(x)}=1,
\end{align*}
where the first inequality holds because the union of $a_i$ and all existing groups is a subset of $\cA(x)$, the second inequality holds due to \eqref{eq:prob_group_lb}, and the last inequality holds due to the assumption of $M>4C^*(x)-1$. We have thus arrived at a contradiction, and we conclude that $M\le4C^*(x)$.

For each group, we apply the Chernoff bound:
\begin{align}
&\PP\bigg(\bigvee_{y\in G_j}\ind[\hat\pi(y)\ge3/(4C^*(x))]\bigg)\nonumber\\
&\le\PP\big(\hat\pi(G_j)\ge3/(4C^*(x))\big)\nonumber\\
&\le\exp\Big(-N\frac{(3/(4C^*(x))-\piref(G_i|x))^2}{3/(4C^*(x))+\piref(G_i|x)}\Big)\nonumber\\
&\le e^{-N/(20C^*(x))},\label{eq:group_high_prob}
\end{align}
where the first inequality holds because if the frequency of one response in $G_j$ is larger than $3/(4C^*(x))$, then the total frequency of group $G_j$ should be larger than $3/(4C^*(x))$; the second inequality holds due to the Chernoff bound; the last inequality holds due to \eqref{eq:prob_group_ub}. Applying the union bound to all groups,
\begin{align}
\PP\big(\hat\cY_{3/(4C^*(x))}\not\subset\cA_{1/(4C^*(x))}(x)\big)&=\PP\bigg(\bigvee_{y\in\cY\backslash\cY_{1/(4C^*(x))}}\ind[\hat\pi(y)\ge3/(4C^*(x))]\bigg)\nonumber\\
&\le\sum_{j=1}^M\PP\bigg(\bigvee_{y\in G_j}\ind[\hat\pi(y)\ge3/(4C^*(x))]\bigg)\nonumber\\
&\le Me^{-N/(20C^*(x))}\nonumber\\
&\le4C^*(x)e^{-N/(32C^*(x))},\label{eq:high_prob_2}
\end{align}
where the first inequality holds due to the union bound, the second inequality holds due to \eqref{eq:group_high_prob}, and the last inequality holds because $M\le4C^*(x)$ and $e^{-N/(20C^*(x))}\le e^{-N/(32C^*(x))}$. Combining \eqref{eq:high_prob_1} and \eqref{eq:high_prob_2}, using the union bound, we have
\begin{align*}
\PP(\cE)\ge1-5Ce^{-N/(32C^*(x))}.
\end{align*}
Thus, we have completed the proof of Lemma \ref{lemma:high_prob_event}.
\end{proof}

Using this lemma, we then proceed with the proof of Theorem \ref{theorem:upper_bound}:
\begin{proof}[Proof of Theorem \ref{theorem:upper_bound}]

Suppose that $\cE$ holds. If $y^*$ is included in the submitted responses, then the regret is $0$. We now consider the case where $y^*$ is not submitted. According to the definition of the coverage coefficient, we have
\begin{align*}
\piref(y^*|x)\ge\pi^*(y^*|x)/C^*(x)\ge1/C^*(x),
\end{align*}
so $y^*\in\cY_{1/C^*(x)}(x)$. Furthermore, since $\cY_{1/C^*(x)}(x)\subset\hat\cY_{3/(4C^*(x))}$ when $\cE$ holds, we have $y^*\in\hat\cY_{3/(4C^*(x))}$. Since $y^*$ is not selected as the output, we know that (i) at least $k$ responses are submitted because otherwise all responses in $\hat\cY_{3/(4C^*(x))}$ would be submitted, and (ii) $\hat r(x, y^**)\le\hat r(x, \tilde y_i)$ for any $i\in[k]$. We thus have
\begin{align}\label{eq:hat_r_lb}
\hat r(x, \tilde y_i)\ge\hat r(x, y^*)\ge r^*(x, y^*)-\epsilonopt(x),
\end{align}
where the second inequality holds due to Assumption \ref{assump: reward}. Therefore, the regret conditioned on event $\cE$ is
\begin{align}
\min_{i\in[k]}\{r^*(x, y^*)-r^*(x, \tilde y_i)\}&\le\epsilonopt(x)+\min_{i\in[k]}\{\hat r(x, \tilde y_i)-r_*(x, \tilde y_i)\}\nonumber\\
&\le\epsilonopt(x)+\sqrt{\frac{1}{k}\sum_{i=1}^k|\hat r(x, \tilde y_i)-r_*(x, \tilde y_i)|^2}\nonumber\\
&\le\epsilonopt(x)+\sqrt{\frac{4C^*(x)}k\sum_{i=1}^k\piref(\tilde y_i|x)|\hat r(x, \tilde y_i)-r^*(x, \tilde y_i)|^2}\nonumber\\
&\le\epsilonopt(x)+\sqrt{\frac{4C^*(x)}{k}\sum_{y\in\cY}\piref(y|x)|\hat r(x, y)-r^*(x, y)|^2}\nonumber\\
&=\epsilonopt(x)+\sqrt{\frac{4C^*(x)\epsilonRM^2(x)}{k}},\label{eq:regret_no_y*}
\end{align}
where the first inequality holds due to \eqref{eq:hat_r_lb}, the second inequality holds because the minimum is no larger than the average, the third inequality holds because $\piref(y|x)\ge1/(4C^*(x))$ for any $y\in\hat\cY_{3/(4C^*(x))}$ when $\hat\cY_{3/(4C^*(x))}\subset\cY_{1/(4C^*(x))}(x)$, the fourth inequality holds because $\{\tilde y_1, \dots, \tilde y_k\}$ is a subset of $\cY$, and the last equality holds due to the definition of the estimation error $\epsilonRM^2(x)$. Combining \eqref{eq:regret_no_y*} with the case where $y^*\in\{\tilde y_1, \dots, \tilde y_k\}$ and the regret is $0$, we conclude that under condition $\cE$,
\begin{align}\label{eq:regret_good}
r^*(x, y^*)-\max_{i\in[k]}r^*(x, \tilde y_i)\le\epsilonopt(x)+\sqrt{\frac{4C^*(x)\epsilonRM^2(x)}{k}}.
\end{align}
Finally, we take the complete expectation of the regret:
\begin{align*}
\mathrm{Regret}(x)&=\EE\Big[r^*(x, y^*)-\max_{i\in[k]}r^*(x, \tilde y_i)\Big|\cE\Big]\cdot\PP(\cE)+\EE\Big[r^*(x, y^*)-\max_{i\in[k]}r^*(x, \tilde y_i)\Big|\neg\cE\Big]\cdot\PP(\neg\cE)\\
&\le\bigg(\epsilonopt(x)+\sqrt{\frac{4C^*(x)\epsilonRM^2(x)}{k}}\bigg)\cdot\PP(\cE)+1\cdot\PP(\neg\cE)\\
&\le\epsilonopt(x)+\sqrt{\frac{4C^*(x)\epsilonRM^2(x)}{k}}+5C^*(x)e^{-N/(32C^*(x))},
\end{align*}
where the first inequality holds due to \eqref{eq:regret_good} and $\mathrm{Regret}(x)\le1$, and the second inequality holds because $\PP(\cE)\le1$ and due to Lemma \ref{lemma:high_prob_event}. Finally, when $N\ge 16 C^*(x) \log \big(kC^*(x)/\epsilonRM^2(x)\big)$, we have
\begin{align*}
    \mathrm{Regret}(x) \le \epsilonopt(x) + O\Big(\sqrt{{C^*(x)\epsilonRM^2(x)}/{k}}\Big).
\end{align*}
We complete the proof of Theorem \ref{theorem:upper_bound}.

\end{proof}

\section{Proof of Lower Bounds}
\label{sec:lowerbounds}
In this section, we will prove the lower bounds used in the main text of this paper. Specifically, we establish the results for majority voting (Theorem \ref{thm: lowerWM}), Best-of-$N$ (Theorem \ref{thm: lowerBoN}), and the general case of Pass@$k$ inference algorithms (Theorem \ref{theorem:lower_bound_general}). Before proceeding, we first establish an independent lower bound regarding $\epsilon_{\text{opt}}(x)$. This result is general and can be applied to any subsequent lower bound, introducing an additional $\epsilon_{\text{opt}}(x)$ term.

\subsection{Lower Bound of $\epsilonopt(x)$}
We first study the following hard case where any algorithm for the Pass@$k$ inference problem suffers from the regret of $\Omega(\epsilonopt(x))$. Combining this lower bound with any algorithm-dependent lower bound $b$ (obtained from the analysis of a hard instance), we can show that the lower bound of the algorithm is
\begin{align*}
\Omega(\max\{\epsilonopt(x), b\})=\Omega(\epsilonopt(x)+b).
\end{align*}

\begin{lemma}
Assume that $\epsilonopt(x)\le\sqrt{C^*(x)\epsilonRM^2(x)}$ and $C^*(x)\ge2k$. Then there exists an instance $\cI=(\cX, \cY, \pi^*, r^*, \piref, \hat r)$ such that the coverage coefficient is $C^*(x)$, and $(r^*, \hat r)$ satisfy Assumptions \ref{assump:epsilon} and \ref{assump: reward}. Furthermore, for any prompt $x\in\cX$, the regret of any algorithm for the Pass@$k$ inference problem satisfies
\begin{align*}
\mathrm{Regret}(x)=\Omega(\epsilonopt(x)).
\end{align*}
\end{lemma}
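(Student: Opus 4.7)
The plan is to exhibit a family of $m = C^*(x)$ instances $\{\cI_j\}_{j\in[m]}$ that are statistically indistinguishable from the algorithm's viewpoint but have different optimal responses, and then use a simple pigeonhole argument to force $\Omega(\epsilonopt(x))$ regret on at least one instance in the family. Concretely, I would take the response set $\cY = \{y_1, \dots, y_m\}$, let the reference policy $\piref(\cdot\mid x)$ be uniform with mass $1/m$ on each $y_i$, and fix the estimated reward as $\hat r(x, y_i) = 1 - \epsilonopt(x)$ for every $i$. For each $j \in [m]$, the instance $\cI_j$ is defined by $r^*(x, y_j) = 1$ and $r^*(x, y_i) = 1 - \epsilonopt(x)$ for $i \ne j$, so that $y^* = y_j$ is the unique maximizer of $r^*$ and $\piref(y^*\mid x) = 1/m$, yielding coverage coefficient exactly $C^*(x)$.

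The next step is to verify the two assumptions. Assumption~\ref{assump: reward} holds by construction because $|r^*(x, y^*) - \hat r(x, y^*)| = \epsilonopt(x)$, and Assumption~\ref{assump:epsilon} holds because
\begin{align*}
\EE_{y \sim \piref(\cdot\mid x)}\big[(r^*(x,y) - \hat r(x,y))^2\big] = \frac{1}{m}\cdot\epsilonopt^2(x) = \frac{\epsilonopt^2(x)}{C^*(x)} \le \epsilonRM^2(x),
\end{align*}
where the last inequality uses precisely the hypothesis $\epsilonopt(x) \le \sqrt{C^*(x)\epsilonRM^2(x)}$. Since $\piref$ and $\hat r$ are identical across all $\cI_j$, the algorithm's output distribution is independent of $j$. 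Let $p_j$ denote the probability that $y_j$ appears among the $k$ submitted responses under this common distribution; because at most $k$ distinct responses can be returned, $\sum_{j=1}^m p_j \le k$, so the hypothesis $C^*(x) \ge 2k$ yields $\bar p := (1/m)\sum_j p_j \le k/C^*(x) \le 1/2$.

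Finally, on instance $\cI_j$ the best true reward among the $k$ submissions is either $1$ (if $y_j$ is submitted) or $1-\epsilonopt(x)$ (otherwise), so the regret on $\cI_j$ equals $\epsilonopt(x)(1-p_j)$. Averaging over $j$ yields $\epsilonopt(x)(1 - \bar p) \ge \epsilonopt(x)/2$, and by the pigeonhole principle some $j^*$ must satisfy $\mathrm{Regret}(x) \ge \epsilonopt(x)/2 = \Omega(\epsilonopt(x))$ on $\cI_{j^*}$, establishing the claim. I do not anticipate a serious obstacle; the only delicate point is reconciling the equality in Assumption~\ref{assump: reward} with the expected-error bound in Assumption~\ref{assump:epsilon}, which is exactly what the hypothesis $\epsilonopt(x) \le \sqrt{C^*(x)\epsilonRM^2(x)}$ enables, and why the alternative rewards in the construction are naturally scaled by $\epsilonopt(x)$ rather than an independent parameter.
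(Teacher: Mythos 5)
Your proposal is correct and follows essentially the same averaging-hammer argument as the paper: a family of instances sharing $\piref$ and $\hat r$ but differing in which response is optimal, with the pigeonhole bound $\sum_j p_j \le k$ forcing average regret $\ge \frac{m-k}{m}\epsilonopt(x)$. The only (cosmetic) differences are that the paper uses $M=2k$ perturbed responses plus a sink response $y_0$ absorbing the remaining reference mass, whereas you spread $\piref$ uniformly over $m=C^*(x)$ responses (which implicitly requires $C^*(x)$ to be an integer, a harmless technicality); your explicit bookkeeping with the probabilities $p_j$ is in fact a slightly cleaner formalization of the paper's counting step.
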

\begin{proof}
For simplicity, we omit the prompt $x$ in our proof. We apply the idea of averaging hammer, and consider a total of $M$ hard instances such that no algorithm can perform well on all instances. The responses set is $\{y_0, y_1, \dots, y_M\}$ for all $M$ hard instances. The reference policy and the approximate reward model are also shared by all instances:
\begin{gather*}
\piref(y_0)=1-M/C^*,\quad\piref(y_1)=\cdots\piref(y_M)=1/C^*;\\
\hat r(y_0)=0,\quad\hat r(y_1)=\cdots=\hat r(y_M)=1-\epsilonopt.
\end{gather*}
The hard instances are different only in the ground-truth reward model and $\pi^*$. For instance $\cI_j=(\cX, \cY, \pi_j^*, r_j^*, \hat r, \piref)$ where $j\in[M]$, we set
\begin{gather*}
\pi_j^*(y_i)=\delta_{ij},\quad
r_j^*(y_i)=\begin{cases}
0 & i=0;\\
1 & i=j;\\
1-\epsilonopt & \text{otherwise}.
\end{cases}
\end{gather*}
For all hard cases, the total estimation error is $\epsilonopt^2/C*\le\epsilonRM^2$. Among these $M$ hard instances, any algorithm that outputs up to $k$ responses will fail to output the optimal response in at least $M-k$ instances, inducing the regret of $\epsilonopt$. Therefore, the average regret of these $M$ instances is at least
\begin{align*}
\mathrm{Regret}\ge\frac{M-k}{M}\epsilonopt.
\end{align*}
Setting $M=2k$, we have $\mathrm{Regret}=\Omega(\epsilonopt)$.
\end{proof}

\subsection{Proof of Theorem \ref{thm: lowerWM} (Lower Bound of Majority Voting)}
\begin{proof}[Proof of Theorem \ref{thm: lowerWM}]

For simplicity, we omit the prompt $x$ in our proof. Consider the following hard instance. The size of the response set is $2 + k$, with $\cY = \{y_0, y^*, y_{1},y_2,\ldots,y_k\}$. The ground truth reward satisfies:
\begin{align*}
    r^*(y_0) = 0;\qquad r^*(y^*) = 1;\qquad r^*(y_{i}) = 1/2, \quad \forall 1\le i\le k.
\end{align*}
Therefore, the optimal policy $\pi^*$ satisfies:
\begin{align*}
    \pi^*(y_0) = 0;\qquad \pi^*(y^*) = 1;\qquad \pi^*(y_i) = 0,\quad.
\end{align*}
In this instance, we assume that the estimated reward function $\hat r$ is accurate. Let $\eta = 2w(1)/w(1/2)$. We further define the reference policy as:
\begin{align*}
    \piref(y_0) = 1- (1+\eta k)/C^*;\qquad \piref(y^*) = 1/C^*; \qquad\piref(y_{i}) = \eta / C^*, \quad \forall 1\le i\le k.
\end{align*}
The reference polity is well defined as long as $C^* \ge 1 + 2kw(1)/w(1/2)$. Now we consider the sampled responses $\hat y_1,\hat y_2,\ldots, \hat y_N$. Define
\begin{align*}
N^*= \sum_{j=1}^N \ind(\hat y_j = y^*);\qquad N_i= \sum_{j=1}^N \ind(\hat y_j =y_i),\quad\forall i\in[k].
\end{align*}
Then the expectations of $N^*$ and $N_i$ are
\begin{align*}
    \EE[N^*] = \frac{N}{C^*};\qquad \EE[N_i] = \frac{\eta N}{C^*}, \quad \forall 1 \le i \le k.
\end{align*}
Using the Chernoff bounds, we have
\begin{align}\label{eq:freq_separation}
    \PP\Big[\frac{N^*}{N} \ge  \frac{3}{2C^*}\Big] \le \exp \Big(\frac{-N}{9C^*}\Big),\quad\PP\Big[\frac{N_i}{N} \le  \frac{3\eta}{4C^*}\Big] \le \exp \Big(\frac{-N \eta}{4C^*}\Big).
\end{align}
Denote $\cE$ as the event such that
\begin{align*}
\frac{N^*}{N} \le  \frac{3}{2C^*};\qquad\frac{N_i}{N} \ge  \frac{3\eta}{4C^*},\quad\forall i\in[k].
\end{align*}
Taking the union bound with \eqref{eq:freq_separation}, we have
\begin{align*}
    \PP(\cE)\ge 1-\exp \Big(\frac{-N}{9C^*}\Big) - k \exp \Big(\frac{-N \eta}{4C^*}\Big)\ge1-(k+1) \exp \Big(\frac{-N}{9C^*}\Big),
\end{align*}
where the last inequality holds because $\eta > 1$. Under event $\cE$, we have
\begin{align*}
\frac{w(1/2)N_i}{w(1)N^*}=\frac{N_i/N}{N^*/N}\cdot\frac{w(1/2)}{w(1)}\ge\frac{3\eta/(4C^*)}{3/(2C^*)}\cdot\frac2\eta=1,
\end{align*}
where the inequality holds due to the definition of the event $\cE$ and the definition of $\eta$. Therefore, conditioned on event $\cE$, the (weighted) majority voting (Algorithm \ref{alg:WM}) will output $\{y_1,\ldots, y_k\}$ and suffer from a 1/2 regret. To summarize, the regret satisfies
\begin{align*}
    \text{Regret} \ge\PP(\cE)\cdot\EE[\mathrm{Regret}|\cE]\ge\frac{1}{2} \bigg(1 - (k+1) \exp \Big(\frac{-N}{9C^*}\Big)\bigg).
\end{align*}
When $N \ge 9C^*(x)\log(2k+2)$, 
\begin{align*}
    1 - (k+1) \exp \Big[\frac{-N}{9C^*}\Big] \ge 1/2.
\end{align*}
\end{proof}
\subsection{Proof of Theorem \ref{thm: lowerBoN} (Lower Bound of BoN)}

To prove Theorem \ref{thm: lowerBoN}, we construct two hard instances to accommodate two cases: (i) When $N$ is small, then it is very likely that $y^*$ does not even appear in $\{\hat y_1, \dots, \hat y_N\}$; (ii) When $N$ is large, then it is very likely that a number of responses that are suboptimal in $r^*$ but better than $y^*$ in $\hat r$ are sampled. The two hard instances share the same structure but are different in parameters.

\begin{proof}[Proof of Theorem \ref{thm: lowerBoN}]

For simplicity, we omit the prompt $x$. We consider two hard instances, one for $N\le C^*$ and the other for $N\ge C^*$.

\noindent\textbf{Case 1:} $N\le C^*$.
We consider a hard instance with $\cY=\{y_0, y^*\}$, and
\begin{gather*}
\pi^*(y_0)=0,\quad\pi^*(y^*)=1;\qquad r^*(y_0)=0,\quad r^*(y^*)=1;\\
\piref(y_0)=1-1/C^*,\quad\piref(y^*)=1/C^*;\qquad\hat r(y_0)=0,\quad\hat r(y^*)=1.
\end{gather*}
For this instance, the estimation errors are $\epsilonopt=\epsilonRM=0$. If no sample in $\hat y_1, \dots, \hat y_N$ is $y^*$, then the regret is $1$. The probability that $y^* \notin \{\hat y_1, \dots, \hat y_N\}$ is $(1-1/C^*)^N$. Therefore, we have
\begin{align*}
\mathrm{Regret}\ge(1-1/C^*)^N\ge(1-1/C^*)^{C^*}\ge1/4,
\end{align*}
where the second inequality holds because $N\le C^*$, and the second inequality holds because $C^*\ge2$. Therefore, the BoN algorithm incurs constant regret in this hard instance when $N\le C^*$.

\noindent\textbf{Case 2:} $N\ge C^*$. We consider the following hard instance: The response set is $\cY=\{y^*, y_0, y_1, \dots, y_M\}$. Let $p > 0$ be a parameter to be determined. The reward models are
\begin{gather*}
r^*(y^*)=1,\quad r^*(y_0)=0,\quad r^*(y_i)=1-\frac{\epsilonRM}{2\sqrt{p}};\\
\hat r(y^*)=1-\delta,\quad\hat r(y_0)=0,\quad \hat r(y_i)=1.
\end{gather*}
where $\delta < \epsilonopt$ is a sufficiently small positive number to ensure that the reward of $y_1, \dots, y_M$ is slightly better than $y^*$ in $\hat r$, but $y^*$ is still the optimal response in $r^*$. In this way, $\pi^*(y^*)=1$ and $\pi^*(y_i)=0$ for $i=0, 1, \dots, M$. The reference model satisfies
\begin{align*}
\piref(y^*)=1/C^*,\quad\piref(y_0)=1-1/C^*-p,\quad\piref(y_i)=p/M.
\end{align*}
For this instance, the coverage is $C^*$, and the estimation error is less than $\epsilonRM^2$ when $\delta$ is sufficiently small.

\noindent\textbf{Simple analysis.} We first consider a simple setting where $M=k$. When $\hat y_1, \dots, \hat y_N$ covers every response in $\{y_1, \dots, y_k\}$, then $\{y_1, \dots, y_k\}$ will be the output of BoN, causing the regret of $\epsilonRM/2\sqrt{p}$. The probability of any $y_i$ not being covered is
\begin{align*}
(1-p/k)^N.
\end{align*}
Using the union bound, the probability that there exists $y_i$ not being coverer is upper bounded by
\begin{align*}
    \PP[\exists i, y_i \notin \{\hat y_1,\ldots, \hat y_N\}] \le k(1-p/k)^N.
\end{align*}
Thus, the regret of making the wrong decisions in $y_1,\ldots, y_k$ is lower bounded by
\begin{align*}
    1-k(1-p/k)^N.
\end{align*}
Then the regret satisfies
\begin{align*}
\mathrm{Regret}\ge\big(1-k(1-p/k)^N\big)\cdot\frac{\epsilonRM}{2 \sqrt p}.
\end{align*}
In this instance, when $\sqrt{N\epsilonRM^2/[k\log(2k)]}/2  <1$, we select $p = (k/N)\cdot\log(2k)$. Then we have
\begin{align*}
   1-k(1-p/k)^N \ge 1/2, 
\end{align*}
and thus the regret can be lower bounded by $\Omega(\sqrt{N\epsilonRM^2/(k\log k)})$. Otherwise, let $p = \epsilonRM^2/4$. And the regret can be lower bounded by $\Omega(1)$. Therefore, we have
\begin{align*}
    \text{Regret} \ge \Omega\Big(\min\Big\{1, \sqrt{{N\epsilonRM^2}/{(k\log k)}}\Big\}\Big).
\end{align*}

This analysis will lead to an additional logarithmic term on $k$, which is unnecessary. To avoid this term, we consider the following improved analysis. 

\noindent\textbf{Improved analysis.} We consider the instance where $M=2k$.
Consider the event where at least $k$ responses among $y_1, \dots, y_M$ are covered by $\hat y_1, \dots, \hat y_N$. Since $\hat r(y_i)>\hat r(y^*)$ for $i=1, \dots, M$, the optimal responses $y^*$ is not included in $\tilde y_1, \dots, \tilde y_k$, which also incurs the regret of $\epsilonRM/(2\sqrt p)$. We now consider the probability of this event. Define the following random variables:
\begin{itemize}[leftmargin=*]
    \item Define $S$ as the number of samples within $y_1, \dots, y_M$, i.e.,
    \begin{align*}
    S=\sum_{i=1}^N\sum_{j=1}^M\ind[\hat y_i=y_j].
    \end{align*}
    \item Define $O_j$ as the occupancy of $y_j$, i.e.,
    \begin{align*}
    O_j=\bigvee_{i=1}^N\ind[\hat y_i=y_j].
    \end{align*}
    \item Define $D$ as the total occupancy of $\{y_1, \dots, y_M\}$, i.e.,
    \begin{align*}
    D=\sum_{j=1}^MO_j.
    \end{align*}
\end{itemize}
Our goal is to lower bound $\PP(D\ge k)$. Fix $s_0>k$. Using the total expectation formula, we have
\begin{align}
\PP(D\ge k)&=\sum_{s\ge k}\PP(D\ge k|S=s)\PP(S=s)\nonumber\\
&\ge\sum_{s\ge s_0}\PP(D\ge k|S=s)\PP(S=s)\nonumber\\
&\ge\PP(D\ge k|S=s_0)\PP(S\ge s_0),\label{eq:prob_D>k_decomp}
\end{align}
where the first inequality holds because $s_0\ge k$, and the second inequality holds because $\PP(D\ge k|S=s)\ge\PP(D\ge k|S=s_0)$ when $s\ge s_0$. We then calculate the two probabilities separately. We first use the Chernoff bound to characterize $\PP(S\ge s_0)$. The expectation of $S$ is
\begin{align*}
\EE[S]=\sum_{i=1}^N\PP(\hat y_i\in\{y_1, \dots, y_M\})=Np.
\end{align*}
Then by the Chernoff bound, we have
\begin{align}\label{eq:prob_S>s_0}
\PP(S\ge s_0)\ge1-\exp\Big(-\frac{(Np-s_0)^2}{2Np}\Big).
\end{align}
We then calculate the conditional probability $\PP(D\ge k|S=s_0)$, and we assume without loss of generality that $\hat y_1, \dots, \hat y_{s_0}$ fall within $\{y_1, \dots, y_M\}$. Conditioned on this event $\cE$, we have $\PP(\hat y_i=y_j)=1/M$ for $1\le i\le s_0$ and $1\le j\le M$. Although we cannot use the vanilla Chernoff bound to bound $\PP(D\ge k|S=s)$, we can use the Chernoff bound for \textbf{negatively-correlated} random variables to bound the probability. We first calculate the expectation of $D$, which is
\begin{align*}
\EE[D|S=s_0]=M\EE[O_j]=M(1-\PP[\hat y_i\neq y_j, \forall i\in[s_0]])=M(1-(1-1/M)^{s_0}).
\end{align*}
We then verify that $O_1, \dots, O_M$ are negatively correlated, which is to show that for any subset $\cJ\subset[M]$, we have $\EE[\prod_{j\in\cJ}O_j]\le\prod_{j\in\cJ}\EE[O_j]$, i.e., $\PP(O_j=1, \forall j\in\cJ)\le\prod_{j\in\cJ}\PP(O_j=1)$. We prove by induction with respect to the cardinality of $\cJ$. The inequality is trivial When $|\cJ|=1$. Suppose that the inequality holds for all $\cJ$ such that $|\cJ|\le n$. It then suffices to show the inequality holds for $\cJ=[n+1]$. Note that
\begin{align*}
&\PP(O_1=1, \dots, O_{n+1}=1)\\
&=\PP(O_1=1, \dots, O_n=1)-\PP(O_1=1, \dots, O_n=1|O_{n+1}=0)\cdot\PP(O_{n+1}=0)\\
&=\PP(O_1=1, \dots, O_n=1)\cdot\PP(O_{n+1}=1)\\
&\quad+\big[\PP(O_n=1, \dots, O_n=1)-\PP(O_1=1, \dots, O_n=1|O_{n+1}=0)\big]\cdot\PP(O_{n+1}=0),
\end{align*}
Using the induction hypothesis, we have
\begin{align*}
\PP(O_1=1, \dots, O_n=1)\cdot\PP(O_{n+1}=1)\le\prod_{j=1}^{n+1}\PP(O_j=1).
\end{align*}
It then suffices to show that
\begin{align*}
\PP(O_n=1, \dots, O_n=1)\le\PP(O_1=1, \dots, O_n=1|O_{n+1}=0),
\end{align*}
which is trivial because the event $\hat y_i=y_j(j\in[n])$ becomes more likely conditioned of the event that $\hat y_i\neq y_{n+1}$. Therefore, the inequality holds for $|\cJ|=n+1$, and we complete the verification of $O_j$ being negatively correlated. Therefore, using the Chernoff bound for negatively-correlated random variables, we have
\begin{align}\label{eq:chernoff_neg_corr}
\PP(D\ge k|S=s_0)\ge1-\exp\bigg(-\frac{\{M[1-(1-1/M)^{s_0}]-k\}^2}{2M[1-(1-1/M)^{s_0}]}\bigg).
\end{align}
Substituting \eqref{eq:prob_S>s_0} and \eqref{eq:chernoff_neg_corr} into \eqref{eq:prob_D>k_decomp}, we have
\begin{align*}
&\mathrm{Regret}\ge\PP(D\ge k)\cdot\frac{\epsilonRM}{2\sqrt p}\\
&\ge\frac{\epsilonRM}{2\sqrt p}\cdot\bigg[1-\exp\bigg(-\frac{\{M[1-(1-1/M)^{s_0}]-k\}^2}{2M[1-(1-1/M)^{s_0}]}\bigg)\bigg]\cdot\bigg[1-\exp\bigg(-\frac{(Np-s_0)^2}{2Np}\bigg)\bigg].
\end{align*}
Let $M=2k,s_0=3k$. If $\sqrt{N\epsilonRM^2/k}/4 \le 1$, we set $p=4k/N$.
In this case, we have
\begin{align*}
1-(1-1/M)^{s_0}=1-\Big(1-\frac1{2k}\Big)^{3k}\ge 1-e^{-1.5}\ge\frac34.
\end{align*}
We thus have
\begin{align*}
&1-\exp\bigg(-\frac{\{M[1-(1-1/M)^{s_0}]-k\}^2}{2M[1-(1-1/M)^{s_0}]}\bigg)\\
&\ge1-\exp\Big(-\frac{(2k\cdot3/4-k)^2}{2\cdot2k\cdot3/4}\Big)\\
&=1-e^{-k/12}\ge1-e^{-1/12},
\end{align*}
where the second inequality holds because $k\ge1$. We also have $Np=4k$, so
\begin{align*}
&1-\exp\bigg(-\frac{(Np-s_0)^2}{2Np}\bigg)=1-\exp\bigg(-\frac{(4k-3k)^2}{2\cdot4k}\bigg)=1-e^{-k/8}\ge1-e^{8},
\end{align*}
where the last inequality holds because $k\ge1$. Combining all the above, we have
\begin{align*}
\mathrm{Regret}\ge\frac{\epsilonRM}{\sqrt{4k/N}}\cdot(1-e^{-1/12})\cdot(1-e^{-1/8})\ge0.004\sqrt{\frac{N\epsilonRM^2}{k}}.
\end{align*}
Otherwise, the regret is lower bounded by $\Omega(1)$. 
Therefore, we have
\begin{align*}
    \text{Regret} \ge \Omega\Big(\min\Big\{1, {\sqrt{N\epsilonRM^2/{k}}}\Big\}\Big).
\end{align*}
\end{proof}

\subsection{Proof of Theorem \ref{theorem:lower_bound_general} (General Lower bound}

We first provide a more general version of Theorem \ref{theorem:lower_bound_general}:
\begin{theorem}\label{theorem:lower_bound_general_appendix}
Assume that $C^*(x)\ge\max\{k, 2\}$. Then for any positive integer $M\in[k, C^*(x)]$ and any algorithm $A$ that outputs $k$ responses, there exists a hard instance $\cI=(\cX, \cY, \pi^*, r^*, \piref, \hat r)$ such that the coverage is $C$, the estimation error is $\epsilonRM^2$, and the regret of algorithm $A$ satisfies
\begin{align*}
\mathrm{Regret}(x)\ge\frac{M-k}{M}\sqrt{\frac{C^*(x)\epsilonRM^2}{M-1}}.
\end{align*}
\end{theorem}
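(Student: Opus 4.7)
The strategy is an averaging-hammer reduction: I construct $M$ hard instances $\{\cI_j\}_{j\in[M]}$ that share a common reference policy $\piref$ and a common estimated reward $\hat r$, and differ only in which candidate is the true optimum. Since any Pass@$k$ algorithm only uses $\piref$-samples and queries to $\hat r$, its output distribution is identical across all $M$ instances, so a pigeonhole argument on its $k$ selections controls the average regret across the $M$ instances, from which some single instance must attain at least this average.

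\textbf{Construction and steps.} Let $\cY=\{y_0,y_1,\dots,y_M\}$, $\piref(y_i)=1/C^*(x)$ for $i\in[M]$, $\piref(y_0)=1-M/C^*(x)$ (well-defined since $M\le C^*(x)$), $\hat r(y_i)=1$ for $i\in[M]$, $\hat r(y_0)=0$, and set the gap $\gamma:=\sqrt{C^*(x)\epsilonRM^2/(M-1)}$ (the interesting regime has $\gamma\le 1$). For each $j\in[M]$, instance $\cI_j$ has $\pi_j^*(y_j)=1$, $r_j^*(y_j)=1$, $r_j^*(y_i)=1-\gamma$ for $i\in[M]\setminus\{j\}$, and $r_j^*(y_0)=0$. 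The steps are: (i) verify Assumptions \ref{assump:epsilon} and \ref{assump: reward} --- coverage is $1/\piref(y_j)=C^*(x)$, $\epsilonopt=|1-\hat r(y_j)|=0$, and the expected squared error telescopes to $(M-1)\gamma^2/C^*(x)=\epsilonRM^2$; (ii) let $q(y):=\Pr[y\in\{\tilde y_1,\dots,\tilde y_k\}]$, which does not depend on $j$ since $\piref$ and $\hat r$ are shared across instances, and note that $\sum_{j=1}^Mq(y_j)\le k$ because at most $k$ distinct responses are returned; (iii) case-check that $1-\max_i r_j^*(\tilde y_i)\ge\gamma\cdot\ind[y_j\notin\{\tilde y_1,\dots,\tilde y_k\}]$ always holds (regret $0$ if $y_j$ is output, exactly $\gamma$ if some other $y_i$ with $i\in[M]$ is output, and $1\ge\gamma$ if only $y_0$ is), so $\mathrm{Regret}(\cI_j)\ge\gamma(1-q(y_j))$; (iv) average over $j\in[M]$ to get $\frac{1}{M}\sum_j\mathrm{Regret}(\cI_j)\ge\gamma\cdot(M-k)/M$, so at least one instance attains the claimed bound.

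\textbf{Main obstacle.} The crucial and non-obvious design choice is making $\hat r$ \emph{constant} on $\{y_1,\dots,y_M\}$: this simultaneously zeroes out $\epsilonopt$ and spreads the entire $\epsilonRM^2$ budget uniformly across the $M-1$ incorrect candidates of each instance (each contributing $\gamma^2/C^*(x)$), which is what produces the decisive $1/\sqrt{M-1}$ scaling in $\gamma$. A more ``natural'' construction that let $\hat r$ discriminate the optimum (e.g.\ $\hat r(y^*)\ne\hat r(y_i)$) would waste error budget and shrink $\gamma$ by only a factor of $\sqrt M$ in the denominator, giving a gap $\sqrt{MC^*(x)\epsilonRM^2/(M-1)}$; the resulting bound is constant in $k$ and destroys the optimal $1/\sqrt k$ scaling. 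Once this choice is made, the case analysis and the pigeonhole averaging are routine, and Theorem \ref{theorem:lower_bound_general} then follows by plugging $M=2k$ and adding the separately established $\Omega(\epsilonopt(x))$ lower bound.
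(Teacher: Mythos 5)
Your proposal is correct and follows essentially the same route as the paper: the identical averaging-hammer construction with $M$ indistinguishable instances sharing $\piref$ and a constant $\hat r$ on $\{y_1,\dots,y_M\}$, the gap $\gamma=\sqrt{C^*(x)\epsilonRM^2/(M-1)}$ calibrated so the squared-error budget is exactly $\epsilonRM^2$, and a pigeonhole average over the $M$ instances. Your step (ii)--(iii) formalization via $q(y)$ is in fact a slightly more careful treatment of randomized algorithms than the paper's informal ``fails on at least $M-k$ instances'' phrasing, but it is the same argument.
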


When $C\ge 2k$, we can set $M=2k$ and obtain the regret lower bound of $\Omega(\sqrt{C\epsilonRM^2/k})$ in Theorem \ref{theorem:lower_bound_general}. We now present the proof of Theorem \ref{theorem:lower_bound_general_appendix}.

\begin{proof}[Proof of Theorem \ref{theorem:lower_bound_general_appendix}]

We consider the case of $\cX=\{x\}$, and omit the prompt $x$ in $A(x)$, $\piref(\cdot|x)$, $\hat r(x, \cdot)$, etc.
 
To prove Theorem \ref{theorem:lower_bound_general}, we apply the idea of averaging hammer, and consider a total of $M$ hard instances such that no algorithm can perform well on all instances. All of these hard instances have a total of $M+1$ possible responses $\cY=\{y_0, \dots, y_M\}$, and we aim to make $y_1, \dots, y_M$ hard to distinguish from each other. In detail, all hard instances also share the same reference model and the same $\hat r$:
\begin{gather*}
\piref(y_0)=1-M/C,\quad\piref(y_1)=\cdots=\piref(y_M)=1/C;\\
\hat r(y_0)=0,\quad \hat r(y_1)=\cdots=\hat r(y_M)=1.
\end{gather*}
For hard instance $\cI_j(j\in[M])$, we make $y_j$ the optimal response with ground truth reward being $1$ and $\pi^*(y_j)=1$, and make all other responses suboptimal with a gap of $\delta$, i.e., $\cI_j=(\cX, \cY, \pi_j^*, r_j^*, \piref, \hat r)$, where
\begin{align*}
\pi_j^*(y_l)=\delta_{jl},\quad
r_j^*(y_l)=\begin{cases}
0 & l=0;\\
1 & l=j;\\
1-\delta & \text{otherwise}.
\end{cases}
\end{align*}
In this hard instance, the coverage is $C$, and in order to make the estimation error equal to $\epsilonRM^2$, we require
\begin{align*}
(M-1)\cdot\delta^2\cdot1/C=\epsilonRM^2,
\end{align*}
which indicates that $\delta=\sqrt{C\epsilonRM^2/(M-1)}$. Since any algorithm can only output a maximum of $k$ different responses, it cannot output the optimal response in at least $M-k$ out of the $M$ hard instances, suffering from the regret of at least $\delta$. Therefore, the averaged regret of the $M$ instances is at least
\begin{align*}
\frac1M\sum_{j=1}^M\EE_{\tilde y_1, \dots, \tilde y_k\sim A}\big[r_j^*(y_j)-\max\big\{r_j^*(\tilde y_1), \cdots, r_j^*(\tilde y_k)\big\}\big]\ge\frac1M\cdot(M-k)\cdot\delta=\frac{M-k}{M}\sqrt{\frac{C\epsilonRM^2}{M-1}}.
\end{align*}
Therefore, there exists an instance $\cI_{j^*}$ within the $M$ hard instances such that
\begin{align*}
\EE_{\tilde y_1, \dots, \tilde y_k\sim A}\big[r_{j^*}^*(y_{j^*})-\max\big\{r_{j^*}^*(\tilde y_1), \cdots, r_{j^*}^*(\tilde y_k)\big\}\big]\ge\frac{M-k}{M}\sqrt{\frac{C\epsilonRM^2}{M-1}}.
\end{align*}
\end{proof}

\section{Additional Experiments}\label{app:addtional-exp}

\bibliography{arxiv}
\bibliographystyle{ims}
\end{document}